\setlist{topsep=0pt, leftmargin=*}
\DeclareMathAlphabet\mathcal{OMS}{cmsy}{m}{n}
\DeclareMathAlphabet\mathbfcal{OMS}{cmsy}{b}{n}
\theoremstyle{plain}
\newtheorem{theorem}{Theorem}[section]
\theoremstyle{definition}
\newtheorem{definition}[theorem]{Definition}
\theoremstyle{remark}
\newif\ifcomment
\providecommand{\OM}[1]{\ifcomment{\small \color{blue} [OM: #1]}\fi} 
\providecommand{\R}[1]{\ifcomment{\small \color{purple} [R: #1]}\fi} 
\icmltitlerunning{Superhuman Fairness} 
\begin{document}

\twocolumn[
\icmltitle{Superhuman Fairness}




\begin{icmlauthorlist}
\icmlauthor{Omid Memarrast}{to}
\icmlauthor{Linh Vu}{to}
\icmlauthor{Brian Ziebart}{to}
\end{icmlauthorlist}

\icmlaffiliation{to}{Computer Science Department, University of Illinois Chicago}

\icmlcorrespondingauthor{O. Memarrast}{omemar2@uic.edu}

\icmlkeywords{Fairness, Classification, Imitation Learning, Machine Learning}

\vskip 0.3in
]



\printAffiliationsAndNotice{\icmlEqualContribution} 

\begin{abstract}
The fairness of machine learning-based decisions has become an increasingly important focus in the design of supervised machine learning methods.
Most fairness approaches optimize a specified trade-off between 
performance measure(s) (e.g., accuracy, log loss, or AUC) and fairness metric(s) (e.g., demographic parity, equalized odds). 
This begs the question: are the right performance-fairness trade-offs being specified?
We instead re-cast fair machine learning as an imitation learning task by introducing \emph{superhuman fairness}, which seeks to simultaneously outperform human decisions on multiple predictive performance and fairness measures.
We demonstrate the benefits of this approach given suboptimal decisions.

%
%
\end{abstract}

\section{Introduction}
\label{intro}
The social impacts of algorithmic decisions based on machine learning have motivated various group and individual fairness properties that decisions should ideally satisfy \cite{calders2009building,hardt2016equality}.
Unfortunately, impossibility results prevent multiple common group fairness properties from being simultaneously satisfied \cite{kleinberg2016inherent}.
Thus, no set of decisions can be universally fair to all groups and individuals for all notions of fairness. Instead, specified weightings, or trade-offs, of different criteria are often optimized \cite{liu2022accuracy}.
Identifying an appropriate trade-off to prescribe to these fairness methods is a daunting task open to application-specific philosophical and ideological debate that could delay or completely derail the adoption of algorithmic methods.

\begin{figure}[t]
\begin{center}
\begin{minipage}{0.6\columnwidth}
\includegraphics[width=0.95\columnwidth]{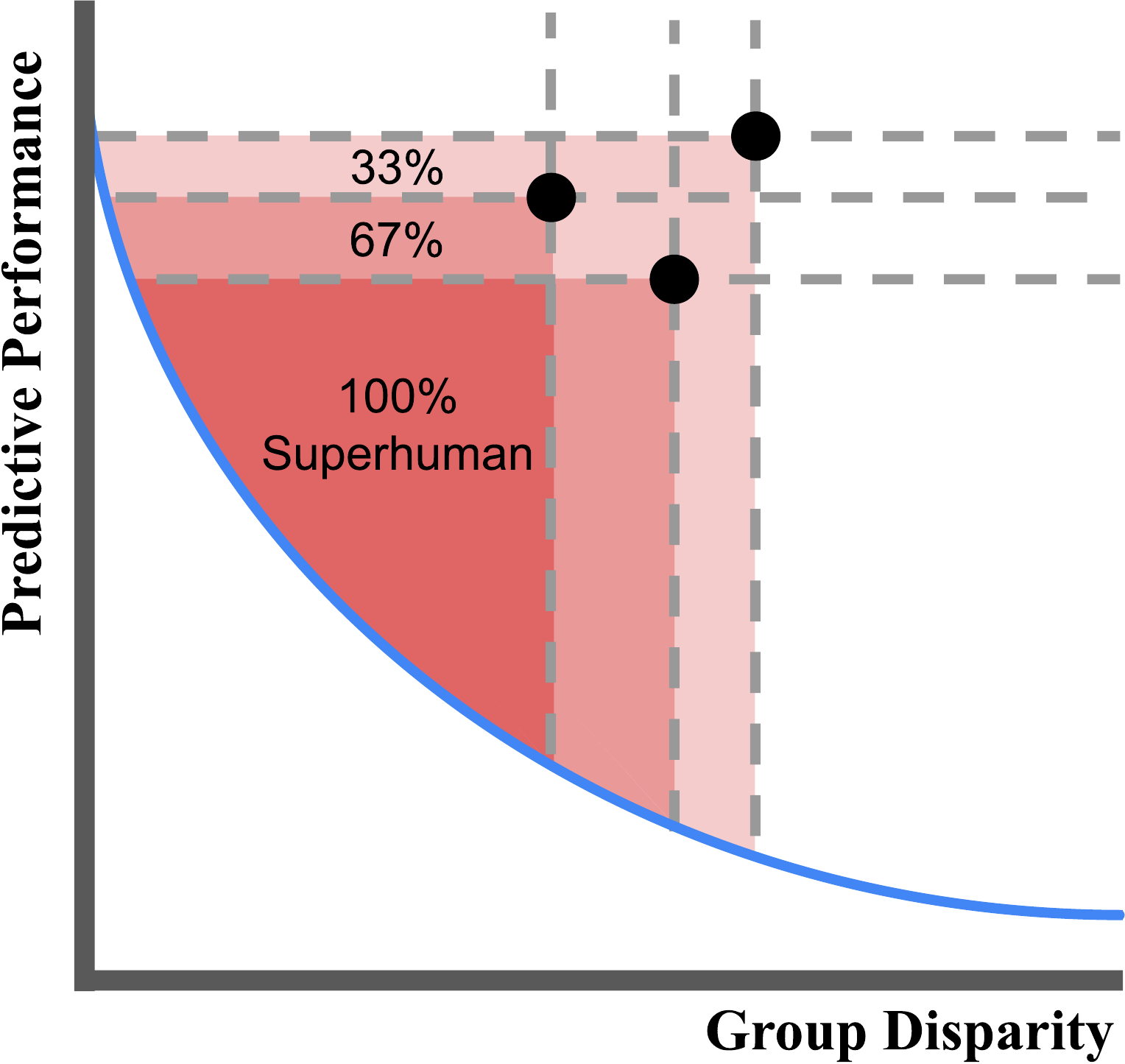} 
\end{minipage}
\begin{minipage}{0.38\columnwidth}
\vspace{-5mm}
\captionof{figure}{Three sets of decisions (black dots) with different predictive performance and group disparity values defining the sets of 100\%-, \mbox{67\%-,} and 33\%-superhuman fairness-performance values (red shades) based on Pareto dominance.}
\label{fig:superhuman}
\end{minipage}
\vspace{-7mm}
\end{center}
\end{figure}

We consider the motivating scenario of a fairness-aware decision task currently being performed by well-intentioned, but inherently error-prone human decision makers.
Rather than seeking optimal decisions for specific performance-fairness trade-offs, which may be difficult to accurately elicit, we propose a more modest, yet more practical objective: {\bf outperform human decisions across all performance and fairness measures with maximal frequency}.
We implicitly assume that available human decisions reflect desired performance-fairness trade-offs, but are often noisy and suboptimal.
This provides an opportunity for {\bf superhuman decisions} that Pareto dominate human decisions across predictive performance and fairness metrics (Figure \ref{fig:superhuman}) \emph{without identifying an explicit desired trade-off}.

To the best of our knowledge, this paper is the first to define fairness objectives for supervised machine learning with respect to noisy human decisions rather than using prescriptive trade-offs or hard constraints.
We leverage and extend a recently-developed imitation learning method for {\bf subdominance minimization} \cite{ziebart2022towards}. 
Instead of using the subdominance to identify a target trade-off, as previous work does in the inverse optimal control setting to estimate a cost function, we use it to directly optimize our fairness-aware classifier.
We develop policy gradient optimization methods \cite{sutton2018reinforcement} that allow flexible classes of probabilistic decision policies to be optimized for given sets of performance/fairness measures and demonstrations.



We conduct extensive experiments on standard fairness datasets ({\tt Adult} and {\tt COMPAS}) using accuracy as a performance measure and three conflicting fairness definitions: Demographic Parity \cite{calders2009building}, Equalized Odds \cite{hardt2016equality}, and Predictive Rate Parity \cite{chouldechova2017fair}).
Though our motivation is to outperform human decisions, we employ a synthetic decision-maker with differing amounts of label and group  membership noise to identify sufficient conditions for superhuman fairness of varying degrees.
We find that our approach achieves high levels of superhuman performance that increase rapidly with reference decision noise and significantly outperform the superhumanness of other methods that are based on more narrow fairness-performance objectives.


\R{The amount of contribution compared to previous work is not clear. In the abstract, the authors indicate that the approach is an extension of Ziebart et al. 2022 to the fair binary classification setting. However, from the section subdominance Minimization using Inverse Optimal Control, it seems that it is an application of the work for fair classification, not an extension?}

\OM{Discuss the contributions of the paper: 1) extension of Ziebart et al. not a trivial extensio 2) a general framework that works for non-binary classification and non-binary protected attributes. 3) it accept a general function as the classifier, we have developed logistic-regression and neural networks (hopefully). The NN version does not need protected attribute as an input since it can predict it. 4) The approach works for any subset of fairness/performance measures and }

\section{Fairness, Elicitation, and Imitation} 


\subsection{Group Fairness Measures}

Group fairness measures are primarily defined by confusion matrix statistics (based on labels $y_i \in \{0, 1\}$ and decisions/predictions $\hat{y}_i \in \{0, 1\}$ produced from inputs ${\bf x}_i \in \mathbb{R}^M$) for examples belonging to different protected groups (e.g., $a_i \in \{0, 1\}$).   

We 
focus on three prevalent fairness properties in this paper: 
\begin{itemize}
\item {\bf Demographic Parity} (DP) \cite{calders2009building} 
requires equal positive rates across protected groups:
\[
\mathrm{P}(\Yhat = 1 | A = 1) = \mathrm{P}(\Yhat = 1| A = 0);
\]
\item {\bf Equalized Odds} (EqOdds) \cite{hardt2016equality}
requires equal true positive rates and false positive rates across groups, i.e.,
{\small
\[
\mathrm{P}(\Yhat\! =\! 1 | Y\! =\! y, A\! =\! 1) = \mathrm{P}(\Yhat\! =\! 1| Y\! =\! y,  A\! =\! 0), \;\; y \in \{0,1\};
\]}
\item {\bf Predictive Rate Parity} (PRP) \cite{chouldechova2017fair}
requires equal positive predictive value ($\yhat=1$) and negative predictive value ($\yhat=0$) across groups:
{\small
\[
\mathrm{P}(Y\! =\! 1 | A\! =\! 1, \Yhat\! =\! \yhat) = \mathrm{P}(Y\! =\! 1| A\! =\! 0, \Yhat\! =\! \yhat), \;\; \yhat \in \{0,1\}.
\]}%
\end{itemize}
Violations of these fairness properties can be
 measured as differences:
{\small
\begin{align}
\texttt{D.DP}(\hat{\bf y},{\bf a}) = &\Bigg|
\frac{\sum_{i=1}^N \mathbb{I}\left[\hat{y}_i\!=\!1, a_i\!=\!1 \right]}{\sum_{i=1}^N \mathbb{I}\left[a_i\!=\!1\right]}
\\ &\qquad
-\frac{\sum_{i=1}^N \mathbb{I}\left[\hat{y}_i\!=\!1, a_i\!=\!0 \right]}{\sum_{i=1}^N \mathbb{I}\left[a_i\!=\!0\right]}
\Bigg|;\notag \\
 \texttt{D.EqOdds}(\hat{\bf y},{\bf y},{\bf a}) = & \!\!
\max_{y \in \{0,1\}} \!
\Bigg|
\frac{\sum_{i=1}^N \mathbb{I}\left[\hat{y}_i\!=\!1,y_i\!=\!y,a_i\!=\!1 \right]}{\sum_{i=1}^N \mathbb{I}\left[a_i\!=\!1,y_i\!=\!y\right]} \notag\\
&\!\!\!
-\frac{\sum_{i=1}^N \mathbb{I}\left[\hat{y}_i\!=\!1,y_i\!=\!y,a_i\!=\!0 \right]}{\sum_{i=1}^N \mathbb{I}\left[a_i\!=\!0,y_i\!=\!y\right]}\Bigg|; \\
  \texttt{D.PRP}(\hat{\bf y},{\bf y},{\bf a})
= &\!\!
\max_{y \in \{0,1\}}
\Bigg|
\frac{\sum_{i=1}^N \mathbb{I}\left[y_i\!=\!1,\hat{y}_i\!=\!y,a_i\!=\!1 \right]}{\sum_{i=1}^N \mathbb{I}\left[a_i\!=\!1,\hat{y}_i\!=\!y\right]} \notag\\
&\!\!\!
-\frac{\sum_{i=1}^N \mathbb{I}\left[y_i\!=\!1,\hat{y}_i\!=\!y,a_i\!=\!0 \right]}{\sum_{i=1}^N \mathbb{I}\left[a_i\!=\!0,\hat{y}_i\!=\!y\right]}\Bigg|.
\end{align}}%

\subsection{Performance-Fairness Trade-offs}

Numerous fair classification algorithms have been developed over the past few years, with most targeting one fairness metric \cite{hardt2016equality}\OM{add more citations?}.
With some exceptions \cite{blum2019recovering}, predictive performance and fairness are typically competing objectives in supervised machine learning approaches.
Thus, though satisfying many fairness properties simultaneously may be na\"ively appealing, doing so often significantly degrades predictive performance or even creates infeasibility \cite{kleinberg2016inherent}.

Given this, many approaches seek to choose parameters $\vec{\theta}$ for (probabilistic) classifier $\mathbb{P}_\vec{\theta}$ that balance the competing predictive performance and fairness objectives \cite{kamishima2012fairness,hardt2016equality,menon2018cost,celis2019classification,martinez2020minimax,rezaei2020fairness}.
Recently, \citet{hsu2022pushing} proposed a novel optimization framework to satisfy three conflicting fairness metrics (demographic parity, equalized odds, and predictive rate parity) to the best extent possible:
{\small
\begin{align}
& \min_\theta \mathbb{E}_{\hat{\bf y} \sim P_\theta}\Big[\textrm{loss}(\hat{\bf y},{\bf y}) + 
\alpha_{\texttt{DP}} \texttt{D.DP}(\hat{\bf y}, {\bf a}) \label{eq:joint_opt} \\
& \qquad + 
\alpha_{\texttt{Odds}} \texttt{D.EqOdds}(\hat{\bf y},{\bf y},{\bf a})  + 
\alpha_{\texttt{PRP}} \texttt{D.PRP}(\hat{\bf y},{\bf y},{\bf a})\Big]. \notag
\end{align}}
\subsection{Preference Elictation \& Imitation Learning}
Preference elicitation \cite{chen2004survey} is one natural approach to identifying desirable performance-fairness trade-offs. 
Preference elicitation methods typically query users for their pairwise preference on a sequence of pairs of options to identify their utilities for different characteristics of the options.
This approach has been extended and applied to fairness metric elicitation \cite{hiranandani2020fair}, allowing efficient learning of linear (e.g., Eq. \eqref{eq:joint_opt}) and non-linear metrics from finite and noisy preference feedback.

Imitation learning \cite{osa2018algorithmic} is a type of supervised machine learning that seeks to produce a general-use policy $\hat{\pi}$ based on demonstrated trajectories of states and actions, $\tilde{\xi}=\left(\tilde{s}_1, \tilde{a}_1, \tilde{s_2}, \ldots, \tilde{s}_T\right)$.  
Inverse reinforcement learning methods \cite{abbeel2004apprenticeship,ziebart2008maximum}
seek to rationalize the demonstrated trajectories as the result of (near-) optimal policies on an estimated cost or reward function.
Feature matching \cite{abbeel2004apprenticeship} plays a key role in these methods, guaranteeing if the expected feature counts match, the estimated policy $\hat{\pi}$ will have an expected cost under the demonstrator's unknown fixed cost function weights $\tilde{w} \in \mathbb{R}^K$ equal to the average of the demonstrated trajectories:
\begin{align}
&\mathbb{E}_{\xi \sim \hat{\pi}}\left[f_k(\xi)\right]=\frac{1}{N} \sum_{i=1}^N f_k\left(\tilde{\xi}_i\right), \forall k \label{eq:feature-matching}\\
&\Longrightarrow \mathbb{E}_{\xi \sim \hat{\pi}}\left[\operatorname{cost}_{\tilde{w}}(\xi)\right]=\frac{1}{N} \sum_{i=1}^N \operatorname{cost}_{\tilde{w}}\left(\tilde{\xi}_i\right),\notag
\end{align}
where $f_k(\xi)=\sum_{s_t \in \xi} f_k\left(s_t\right)$.

\citet{syed2007game} seeks to outperform the set of demonstrations when the signs of the unknown cost function are known, $\tilde{w}_k \geq 0$, by making the inequality,
\begin{align}
\mathbb{E}_{\xi \sim \pi}\left[f_k(\xi)\right]\leq \frac{1}{N} \sum_{i=1}^N f_k\left(\tilde{\xi}_i\right), \forall k,
\end{align}
strict for at least one feature.
Subdominance minimization \cite{ziebart2022towards} seeks to produce trajectories that outperform each demonstration by a margin:
\begin{align}
f_k(\xi) + m_k \leq f_k(\tilde{\xi}_i), \forall i, k,
\end{align}
under the same assumption of known cost weight signs.
However, since this is often infeasible, the approach instead minimizes the subdominance, which measures the $\alpha$-weighted violation of this inequality:
\begin{align}
&\operatorname{subdom}_{\alpha}(\xi, \tilde{\xi}) \triangleq\sum_k \left[\alpha_k\left(f_k(\xi)-f_k(\tilde{\xi})\right)+1\right]_{+},
\end{align}
where $[f(x)]_+ \triangleq \max(f(x), 0)$ is the hinge function and the per-feature margin has been reparameterized as $\alpha_k^{-1}$.
Previous work \cite{ziebart2022towards} has employed subdominance minimization in conjunction with inverse optimal control:
\begin{align}
& \min_{\bf w} \min_\alpha \sum_{i=1}^N \sum_{k=1}^K \operatorname{subdom}_{\alpha}(\xi^*({\bf w}), \tilde{\xi}_i), \text{where: } \notag\\ 
&\qquad\qquad\xi^*({\bf w}) = \operatornamewithlimits{\argmin}_{\xi}\sum_k w_k f_k(\xi), \notag
\end{align}
learning the cost function parameters ${\bf w}$ for the optimal trajectory $\xi^*({\bf w})$ that minimizes subdominance.
One contribution of this paper is extending subdominance minimization to the more flexible prediction models needed for fairness-aware classification that are not directly conditioned on cost features or performance/fairness metrics.

\section{Subdominance Minimization for Improved Fairness-Aware Classification}

We approach fair classification from an imitation learning perspective.
We assume vectors of (human-provided) reference decisions are available that roughly reflect desired fairness-performance trade-offs, but are also noisy.
Our goal is to construct a fairness-aware classifier that outperforms reference decisions on all performance and fairness measures on withheld data as frequently as possible.

\subsection{Superhumanness and Subdominance}

We consider reference decisions $\tilde{\yvec}=\{\tilde{y}_j\}_{j=1}^{\text{M}}$ that are drawn from a human decision-maker or baseline method $\tilde{\Pbb}$,
on a set of M items, $\Xvec_{\text{M}\times\text{L}}=\{\xvec_j\}_{j=1}^{\text{M}}$, where L is the number of attributes in each of $M$ items $\vec{x}_j$.
Group membership attributes $a_m$ from vector $\vec{a}$ indicate to which group item $m$ belongs.

The predictive performance and fairness of decisions $\hat{\yvec}$ for each item are assessed based on ground truth $\yvec$ and group membership $\bf{a}$ using a set of predictive loss and unfairness measures $\{f_k(\hat{\yvec},\yvec,\vec{a})\}$.

\begin{figure}[t]
\begin{center}
\begin{minipage}{0.6\columnwidth}
\includegraphics[width=0.95\columnwidth]{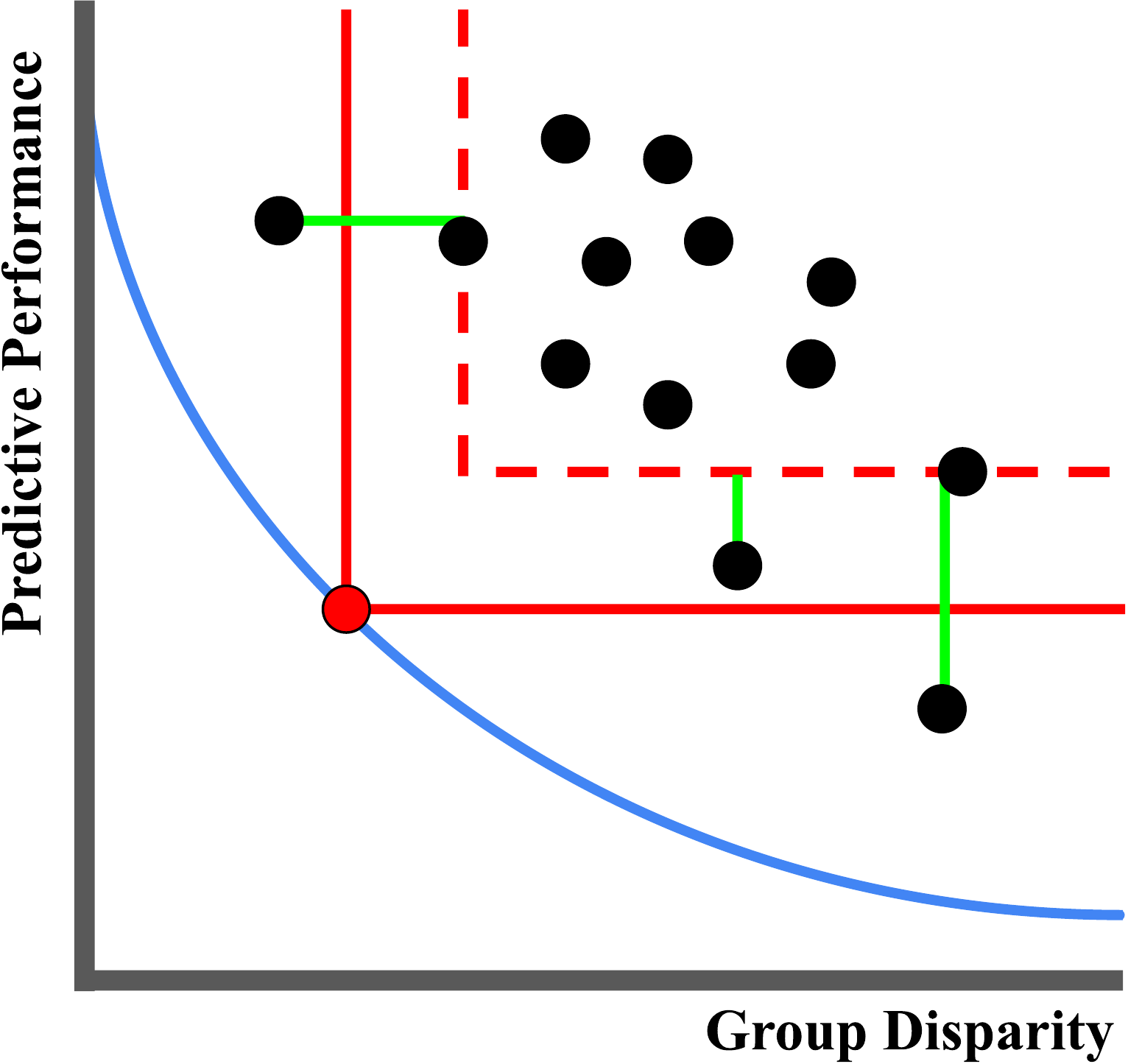} 
\end{minipage}
\begin{minipage}{0.38\columnwidth}
\vspace{-4mm}
\captionof{figure}{A Pareto frontier for possible $\hat{P}_\theta$ (blue) optimally trading off predictive performance (e.g., inaccuracy) and group unfairness.
The model-produced decision (red point) defines dominance boundaries (solid red) and margin boundaries (dashed red), which incur subdominance (green lines) on three examples.
}
\label{fig:subdom}
\end{minipage}
\vspace{-7mm}
\end{center}
\end{figure}

\begin{definition}
A fairness-aware classifier is considered $\gamma$-superhuman for a given set of predictive loss and unfairness measures 
$\{f_k\}$ if its decisions $\hat{\yvec}$ satisfy:
$$
P\left(\boldsymbol{f}\left(\hat{\bf y}, {\bf y}, {\bf a}\right) \preceq \boldsymbol{f}\left(\tilde{\bf y}, {\bf y}, {\bf a}\right)\right) \geq \gamma.
$$
\end{definition}

If strict Pareto dominance is required to be $\gamma$-superhuman, which is often effectively true for continuous domains, then by definition, at most $(1-\gamma)\%$ of human decision makers could be $\gamma$-superhuman.
However, far fewer than $(1-\gamma)$ may be $\gamma-$superhuman if pairs of human decisions do not Pareto dominate one another in either direction (i.e., neither $\boldsymbol{f}\left(\tilde{\bf y}, {\bf y}, {\bf a}\right) \preceq \boldsymbol{f}\left(\tilde{\bf y}', {\bf y}, {\bf a}\right)$ nor $\boldsymbol{f}\left(\tilde{\bf y}', {\bf y}, {\bf a}\right) \preceq \boldsymbol{f}\left(\tilde{\bf y}, {\bf y}, {\bf a}\right)$ for pairs of human decisions $\tilde{\bf y}$ and $\tilde{\bf y}'$).
From this perspective, any decisions with $\gamma-$superhuman performance more than $(1-\gamma)\%$ of the time exceed the performance limit for the distribution of human demonstrators.

Unfortunately, directly maximizing $\gamma$ is difficult in part due to the discontinuity of Pareto dominance ($\preceq$). 
The subdominance \cite{ziebart2022towards} serves as a convex upper bound for non-dominance in each metric $\{f_k\}$ and on $1-\gamma$ in aggregate:
{\small
\begin{align}
& \operatorname{subdom}^k_{\alpha_k}(\hat{\yvec}, \tilde{\yvec}, \yvec, \vec{a})
\triangleq 
\left[\alpha_k\left(f_k(\hat{\yvec}, \yvec, \vec{a}) -f_k(\tilde{\yvec}, \yvec, \vec{a})\right)+1 \right]_+. \notag\\
& \operatorname{subdom}_{\boldsymbol{\alpha}}(\hat{\yvec}, \tilde{\yvec}, \yvec, \vec{a})
\triangleq \sum_k \operatorname{subdom}^k_{\alpha_k}(\hat{\yvec}, \tilde{\yvec}, \yvec, \vec{a}).
\end{align}
}%
Given $N$ vectors of reference decisions as demonstrations, $\tilde{\Yvec}=\{\tilde{\yvec}_i\}_{i=1}^{\text{N}}$, 
the subdominance for decision vector $\hat{\bf y}$ with respect to the set of demonstrations is\footnote{For notational simplicity, we assume all demonstrated decisions $\tilde{\yvec} \in \tilde{\Yvec}$ correspond to the same $M$ items represented in  $\Xvec$. Generalization to unique $\Xvec$ for each demonstration is straightforward.}
\begin{align}
&\operatorname{subdom}_{\boldsymbol{\alpha}}(\hat{\yvec}, \tilde{\Yvec}, \yvec, \vec{a})= \frac{1}{N}\sum_{\tilde{\yvec} \in \tilde{\Yvec}}\operatorname{subdom}_{\boldsymbol{\alpha}}(\hat{\yvec}, \tilde{\yvec}, \yvec, \vec{a}),\notag
\end{align}
where $\hat{\yvec}_i$ is the predictions produced by our model for the set of items $\Xvec_i$, and $\hat{\Yvec}$ is the set of these prediction sets, $\hat{\Yvec}=\{\hat{\yvec}_i\}_{i=1}^{\text{N}}$.
The subdominance is illustrated by Figure \ref{fig:subdom}.
Following concepts from support vector machines \cite{cortes1995support}, reference decisions $\tilde{\yvec}$ that actively constrain the predictions $\hat{\yvec}$ in a particular feature dimension, k, are referred to as \emph{support vectors} and denoted as: 
\[
\tilde{\Yvec}_{\text{SV}_k}(\hat{\yvec}, \alpha_k) = \Bigl\{ \tilde{\yvec} | \alpha_k(f_k(\hat{\yvec})-f_k(\tilde{\yvec})) + 1 \geq 0 \Bigr\}.
\]

\subsection{Performance-Fairness Subdominance Minimization}

We consider probabilistic predictors $\mathbb{P}_\theta : \mathcal{X}^M \rightarrow \Delta_{\mathcal{Y}^M}$ that make structured predictions over the set of items in the most general case, but can also be simplified to make conditionally independent decisions for each item.

\begin{definition}\label{def:min-sub-opt}
The minimally subdominant fairness-aware classifier 
$\hat{\Pbb}_\vec{\theta}$ has model parameters $\vec{\theta}$ chosen by:
{\small
\begin{align}
\operatornamewithlimits{argmin}_{\vec{\theta}} \min_{\vec{\alpha} \succeq 0}  \mathbb{E}_{\hat{\yvec}|{\bf X} \sim P_\vec{\theta}} \left[ 
\operatorname{subdom}_{\vec{\alpha}, 1}\left(\hat{\yvec}, \tilde{\Yvec}, \yvec, \vec{a}\right)\right]+\lambda\|\vec{\alpha}\|_1 . \notag
\end{align}}%
\end{definition}
 Hinge loss slopes $\vec{\alpha} \triangleq \{\alpha_k\}_{k=1}^{\text{K}}$ 
 are also learned from the data during training. For subdominance of $k_{\text{th}}$ feature, $\alpha_k$ indicates the degree of sensitivity to how much the algorithm fails to sufficiently outperform demonstrations in that feature. When $\alpha_k$ value is higher, the algorithm chooses that feature to minimize subdominance. In our setting, features are loss/violation metrics defined to measure the performance/fairness of a set of reference decisions. 


We use the subgradient of subdominance with respect to $\vec{\theta}$ and $\vec{\alpha}$ to update these variables iteratively, and after convergence, the best learned weights $\vec{\theta}^*$ are used in the final model $\hat{\Pbb}_\vec{\theta^*}$. A commonly used model like logistic regression can be used for $\hat{\Pbb}_\vec{\theta}$.

\begin{theorem}\label{theorem:grad-theta}
The gradient of expected subdominance under
$\mathbb{P}_\theta$ 
with respect to the set of reference decisions $\{\tilde{\yvec}_i\}_{i=1}^{\text{N}}$ is:
\small
\begin{align}
&\nabla_\theta \mathbb{E}_{\hat{\bf y}|\vec{X} \sim \hat{P}_\theta}\left[\sum_k \overbrace{\min_{\alpha_k} \left(
\operatorname{subdom}^k_{\alpha_k}\left(\hat{\bf y}, \tilde{\Yvec}, \yvec, \vec{a}\right)+\lambda_k\alpha_k\right)}^{\Gamma_k\left(\hat{\yvec}, \tilde{\Yvec}, \yvec, \vec{a}\right)}\right] \notag\\
&=\mathbb{E}_{\hat{\bf y}|{\bf X} \sim \hat{P}_\theta}\bigg[\left(\sum_k \Gamma_k(\hat{\bf y}, \tilde{\Yvec}, \yvec, \vec{a})\right) \nabla_\theta\log \hat{\Pbb}_\theta(\hat{\bf y}|{\Xvec})\bigg], \notag
\end{align}
\normalsize
where the optimal $\alpha_k$ for each $\gamma_k$ is obtained from:
\begin{align}
\alpha_k=\underset{\alpha_k^{(m)}}{\argmin} \, m \mbox{ such that } f_k\left(\hat{\yvec}\right) + \lambda \leq \frac{1}{m} \sum_{j=1}^m f_k\left(\tilde{\yvec}^{(j)}\right),\notag
\end{align}
using $\alpha_k^{(j)}=\frac{1}{f_k\left(\hat{\yvec}^{(j)}\right)-f_k\left(\tilde{\yvec}^{(j)}\right)}$ to represent the $\alpha_k$ value that would make the demonstration with the $j_{th}$ smallest $f_k$ feature, $\tilde{\yvec}^{(j)}$, a support vector with zero subdominance.
\end{theorem}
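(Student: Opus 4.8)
The plan is to prove the two assertions separately: the score-function (REINFORCE) form of the $\theta$-gradient, and the sorted partial-average characterization of the minimizing $\alpha_k$. I would begin with the gradient. The key observation is that, for each fixed realization of $\hat{\yvec}$, the quantity $\Gamma_k(\hat{\yvec}, \tilde{\Yvec}, \yvec, \vec{a}) = \min_{\alpha_k}\bigl(\operatorname{subdom}^k_{\alpha_k} + \lambda_k \alpha_k\bigr)$ is a deterministic scalar depending only on $\hat{\yvec}$ and the fixed demonstrations, labels, and groups; it carries no dependence on $\theta$, since the inner minimization over $\alpha_k$ is performed per sample and never sees $\theta$. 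Hence $\theta$ enters the objective solely through the sampling distribution $\hat{\Pbb}_\theta(\hat{\yvec}\mid\Xvec)$. Writing the expectation as a sum over the prediction space and applying the log-derivative identity $\nabla_\theta \hat{\Pbb}_\theta = \hat{\Pbb}_\theta\, \nabla_\theta \log \hat{\Pbb}_\theta$ gives the stated form directly. The only technical point is the interchange of $\nabla_\theta$ and the expectation, justified by finiteness of $\mathcal{Y}^M$ (or dominated convergence for a smooth $\hat{\Pbb}_\theta$); because $\Gamma_k(\hat{\yvec})$ is $\theta$-free, no envelope-theorem correction arises from the inner $\min_{\alpha_k}$.

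Next I would characterize the optimal $\alpha_k$. Fix $\hat{\yvec}$ and a single feature $k$, and view $h(\alpha_k) \triangleq \operatorname{subdom}^k_{\alpha_k}(\hat{\yvec}, \tilde{\Yvec}, \yvec, \vec{a}) + \lambda_k \alpha_k$ as a function on $\alpha_k \ge 0$. As a nonnegative combination of hinge terms $[\alpha_k(f_k(\hat{\yvec}) - f_k(\tilde{\yvec})) + 1]_+$ plus a linear term, $h$ is convex and piecewise linear, so its minimizer is attained at a breakpoint. Each demonstration with $f_k(\tilde{\yvec}) > f_k(\hat{\yvec})$ contributes exactly one breakpoint, at the value $\alpha_k^{(j)}$ that drives its hinge argument to $0$, i.e.\ that makes it a zero-subdominance support vector; demonstrations we do not outperform ($f_k(\tilde{\yvec}) \le f_k(\hat{\yvec})$) stay active for every $\alpha_k$. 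Sorting the demonstrations by increasing $f_k(\tilde{\yvec})$ shows that the active support set at any $\alpha_k$ is a prefix $\{\tilde{\yvec}^{(1)}, \dots, \tilde{\yvec}^{(m)}\}$ of the smallest-loss demonstrations, which shrinks monotonically as $\alpha_k$ grows.

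I would then locate the minimizing breakpoint via the subgradient optimality condition: at the candidate $\alpha_k^{(m)}$ the left derivative of $h$ must be nonpositive and the right derivative nonnegative. Computing the left derivative over the active prefix $\{1, \dots, m\}$ and imposing this condition rearranges, after absorbing the averaging constant into $\lambda$, into the stated partial-average inequality $f_k(\hat{\yvec}) + \lambda \le \tfrac{1}{m}\sum_{j=1}^{m} f_k(\tilde{\yvec}^{(j)})$. Since the partial averages of an increasing sequence are themselves increasing in $m$, the set of $m$ satisfying the inequality is upward closed, so choosing the smallest such $m$ pins down a unique breakpoint and hence $\alpha_k = \alpha_k^{(m)}$; one then checks that at this value the $m$-th demonstration indeed sits on its margin with zero subdominance.

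I expect the main obstacle to be this second part: carefully bookkeeping the piecewise-linear subgradient across breakpoints, including the off-by-one between the number of active support vectors and the index of the boundary support vector, verifying that the minimizer is attained at a breakpoint rather than on an open segment, and confirming that the monotonicity of partial averages makes the smallest-$m$ selection well defined and equivalent to the first-order condition. The $\theta$-gradient is routine by comparison once the $\theta$-independence of $\Gamma_k$ is noted.
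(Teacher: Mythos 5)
Your proposal is correct and follows essentially the same route as the paper's proof: the $\theta$-gradient via the log-derivative (policy-gradient) identity after noting that $\Gamma_k$ is $\theta$-free, and the optimal $\alpha_k$ via convexity and piecewise linearity in $\alpha_k$, restriction of the minimizer to breakpoints, and a subgradient sign condition over the order statistics that yields the smallest-$m$ partial-average criterion. Your added observations---that the active support set is a shrinking prefix of the sorted demonstrations and that monotonicity of partial averages makes the smallest-$m$ choice well defined---are refinements the paper leaves implicit rather than a different argument.
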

Using gradient descent, we update the model weights $\vec{\theta}$ using an approximation of the gradient based on a set of sampled predictions $\hat{\yvec} \in \hat{\Yvec}$ from the model $\hat{\Pbb}_\vec{\theta}$:
\small
\begin{align}
    \vec{\theta} \leftarrow \vec{\theta} + \eta \left( \sum_{\hat{\yvec} \in \hat{\Yvec}} \left(\sum_k \Gamma_k(\hat{\bf y}, \tilde{\Yvec}, \yvec, \vec{a})\right) 
    \nabla_\theta\log \hat{\Pbb}_\theta(\hat{\yvec}|{\Xvec})\right), \notag
\end{align}
\normalsize

\begin{figure*}[t]
    \includegraphics[width=.34\textwidth]{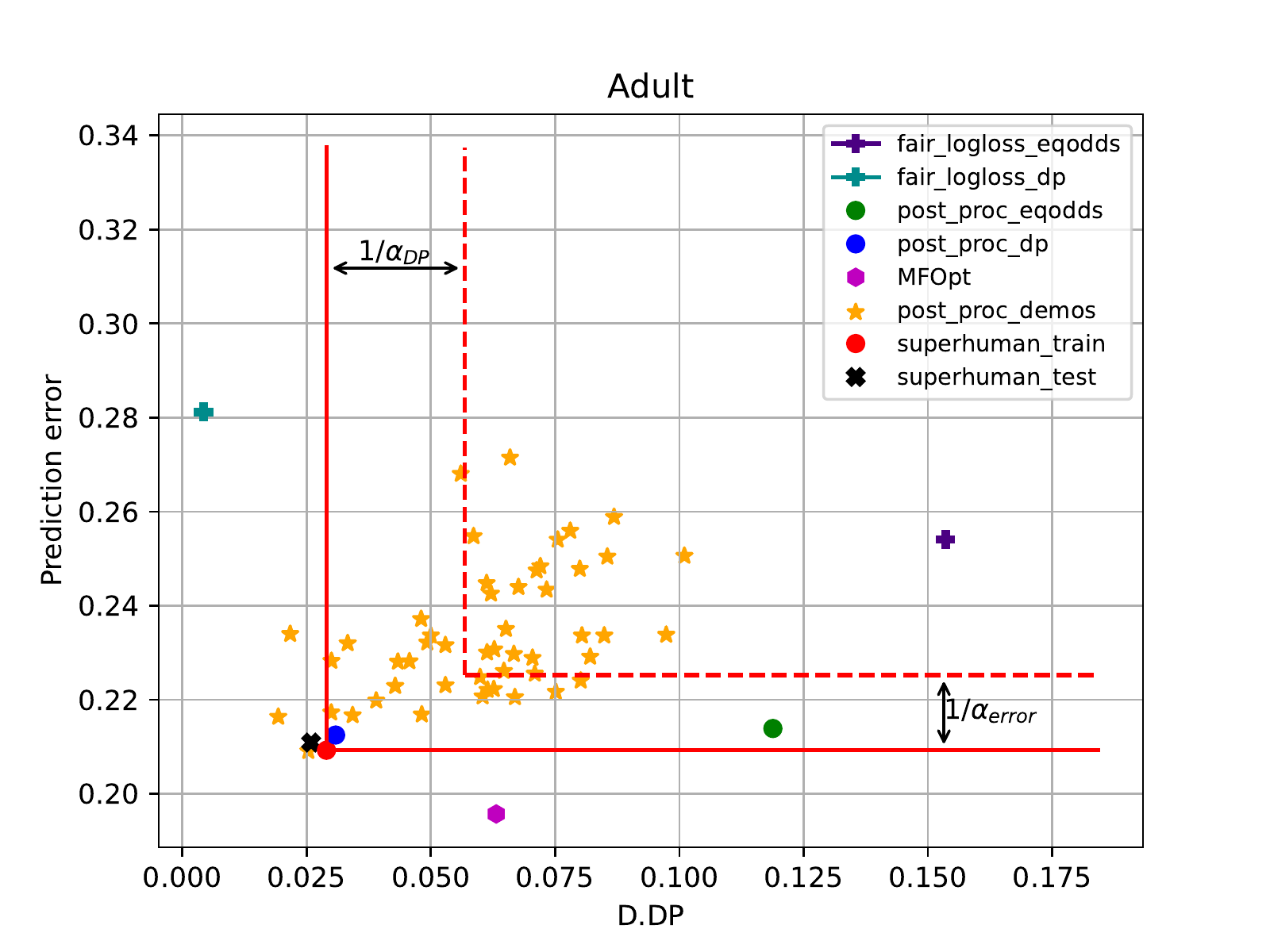} 
    \includegraphics[width=.34\textwidth]{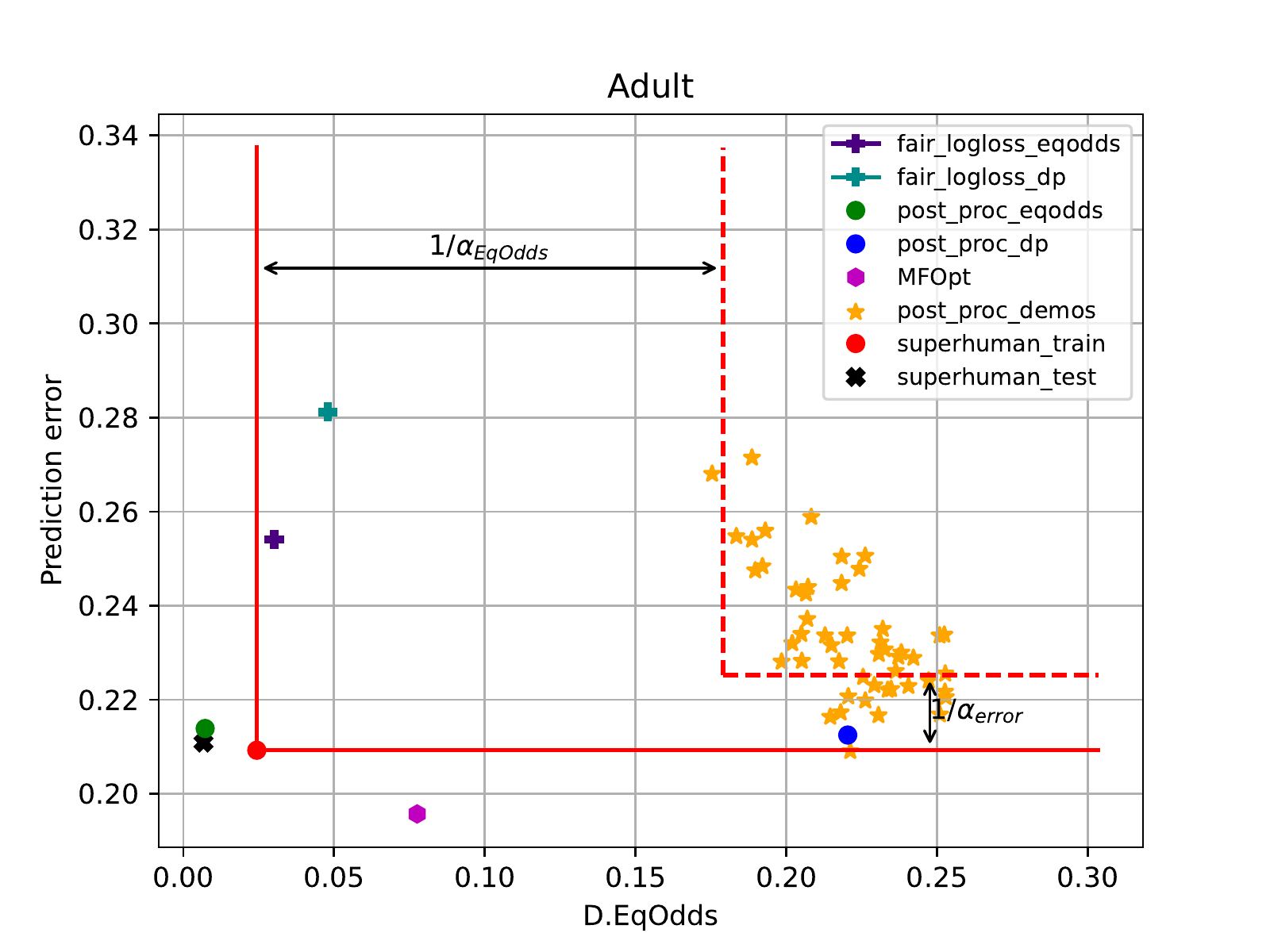}
    \includegraphics[width=.34\textwidth]{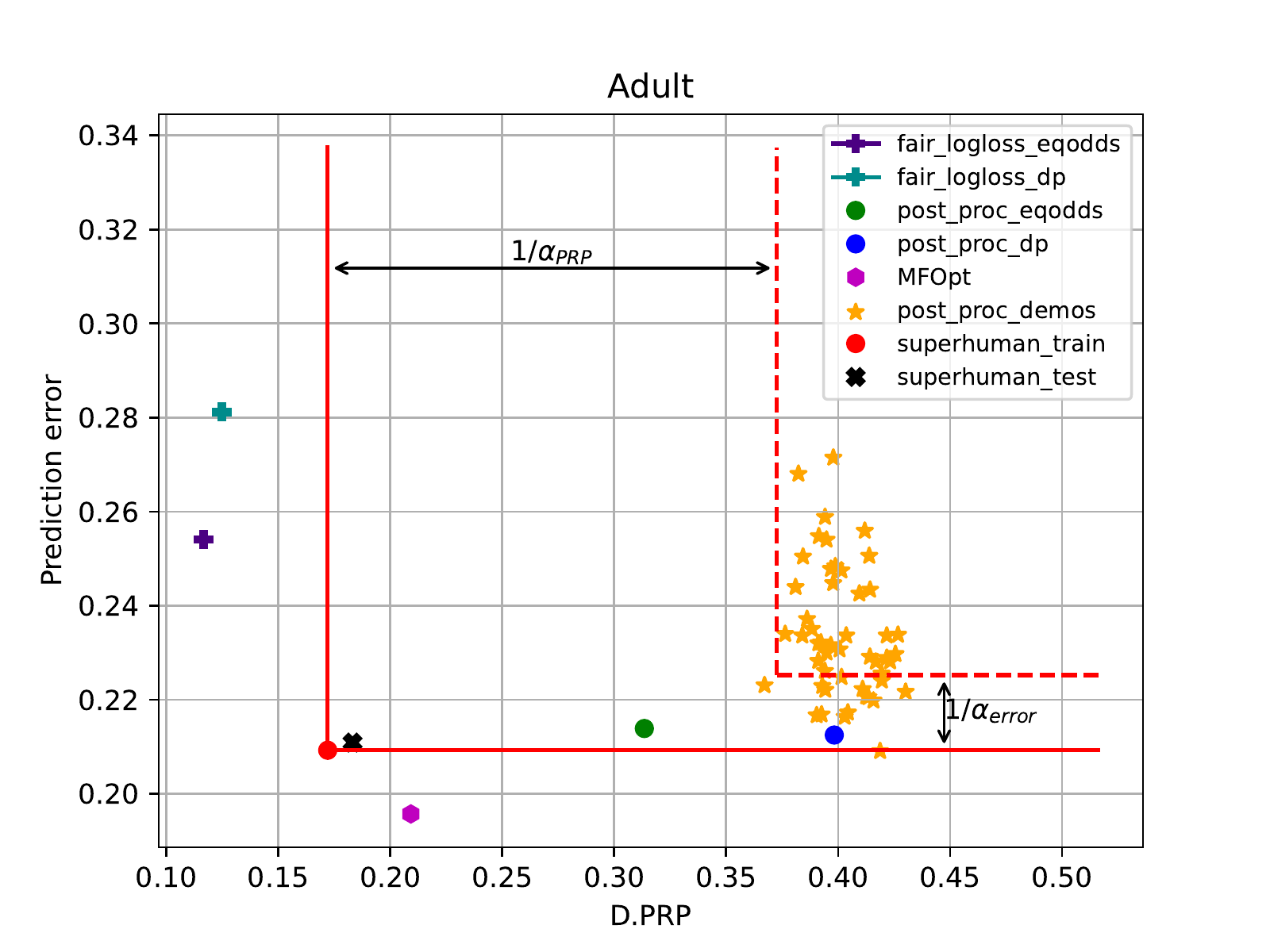}\\
    \includegraphics[width=.34\textwidth]{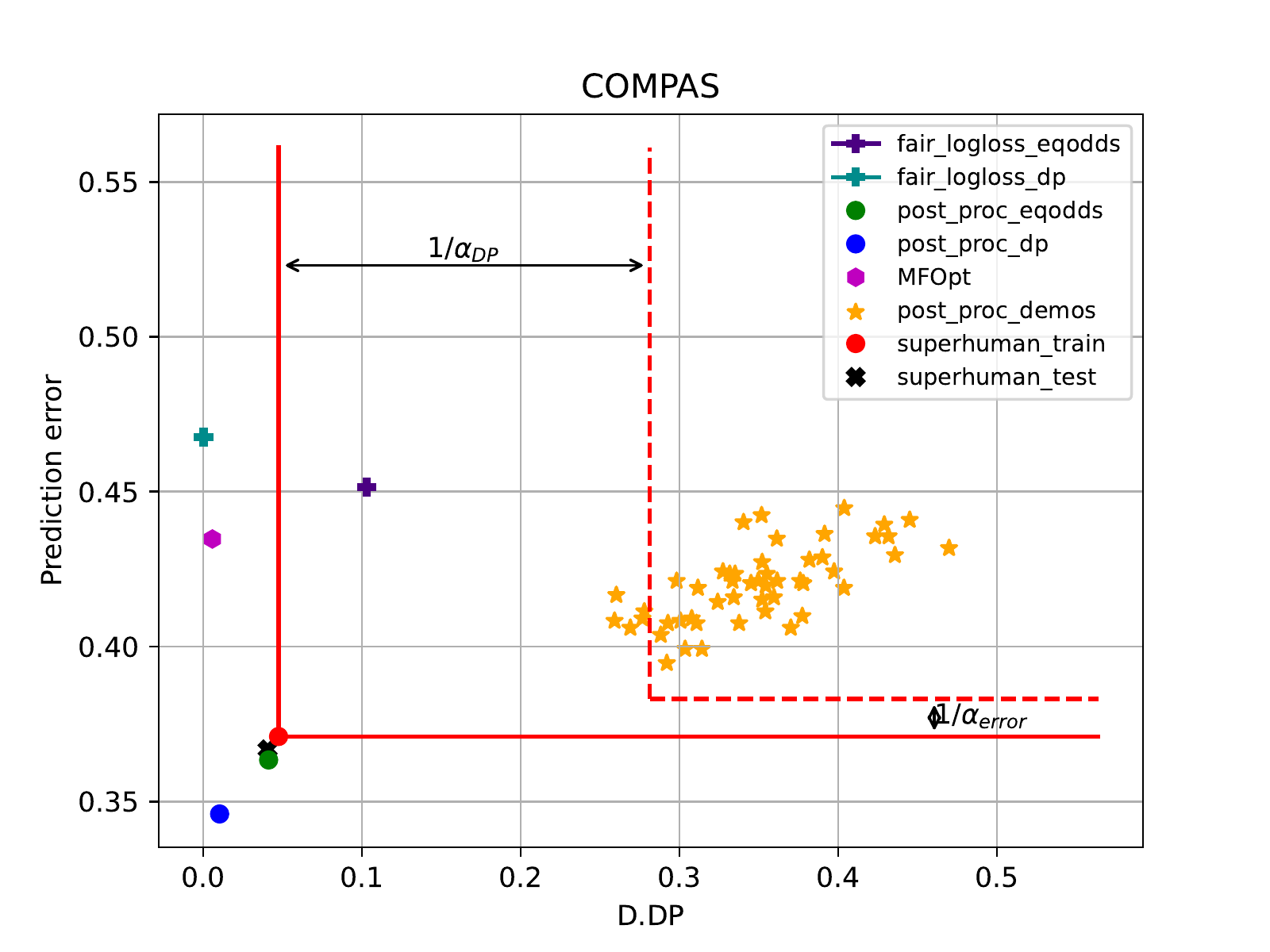} 
    \includegraphics[width=.34\textwidth]{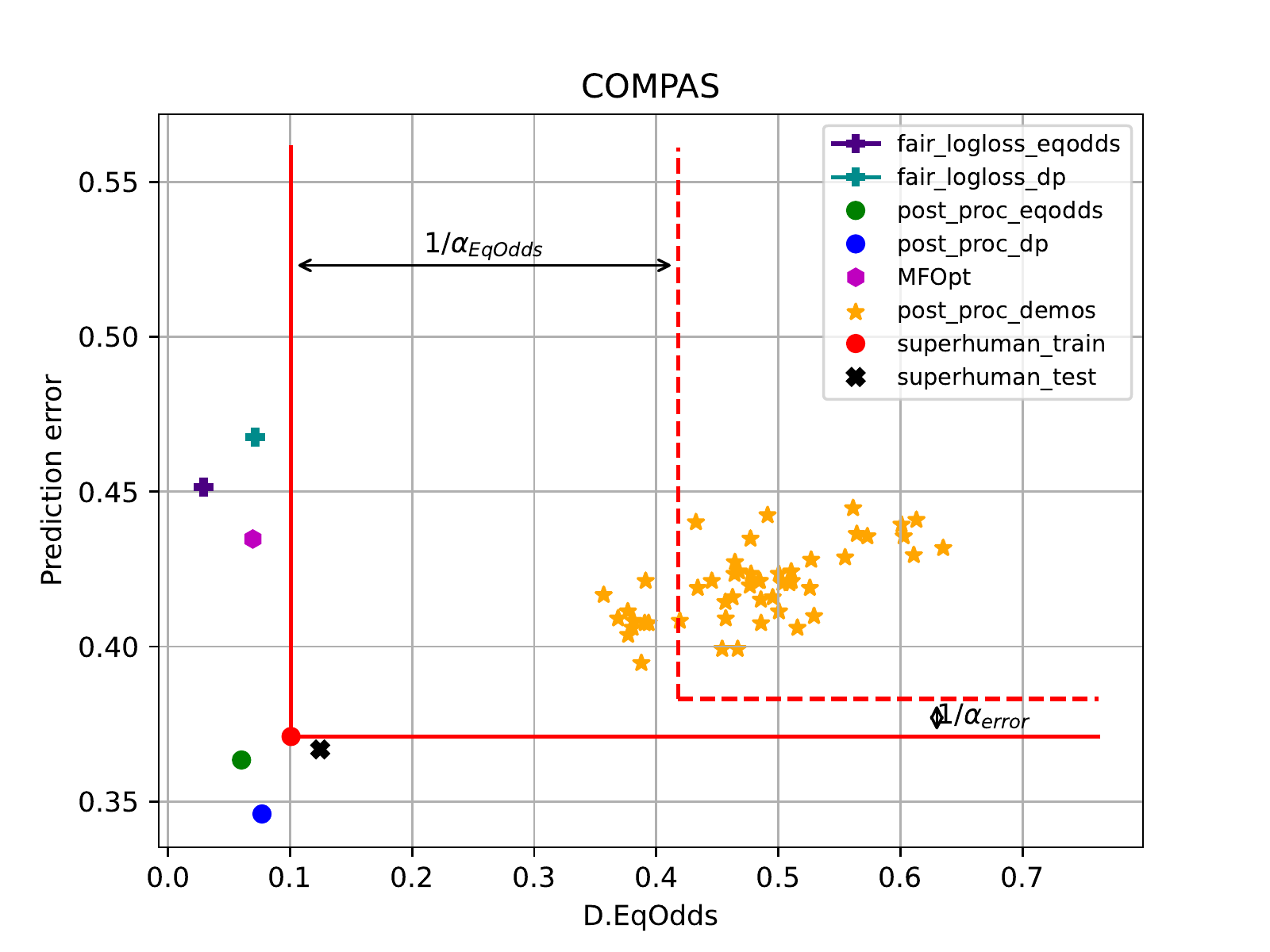}
    \includegraphics[width=.34\textwidth]{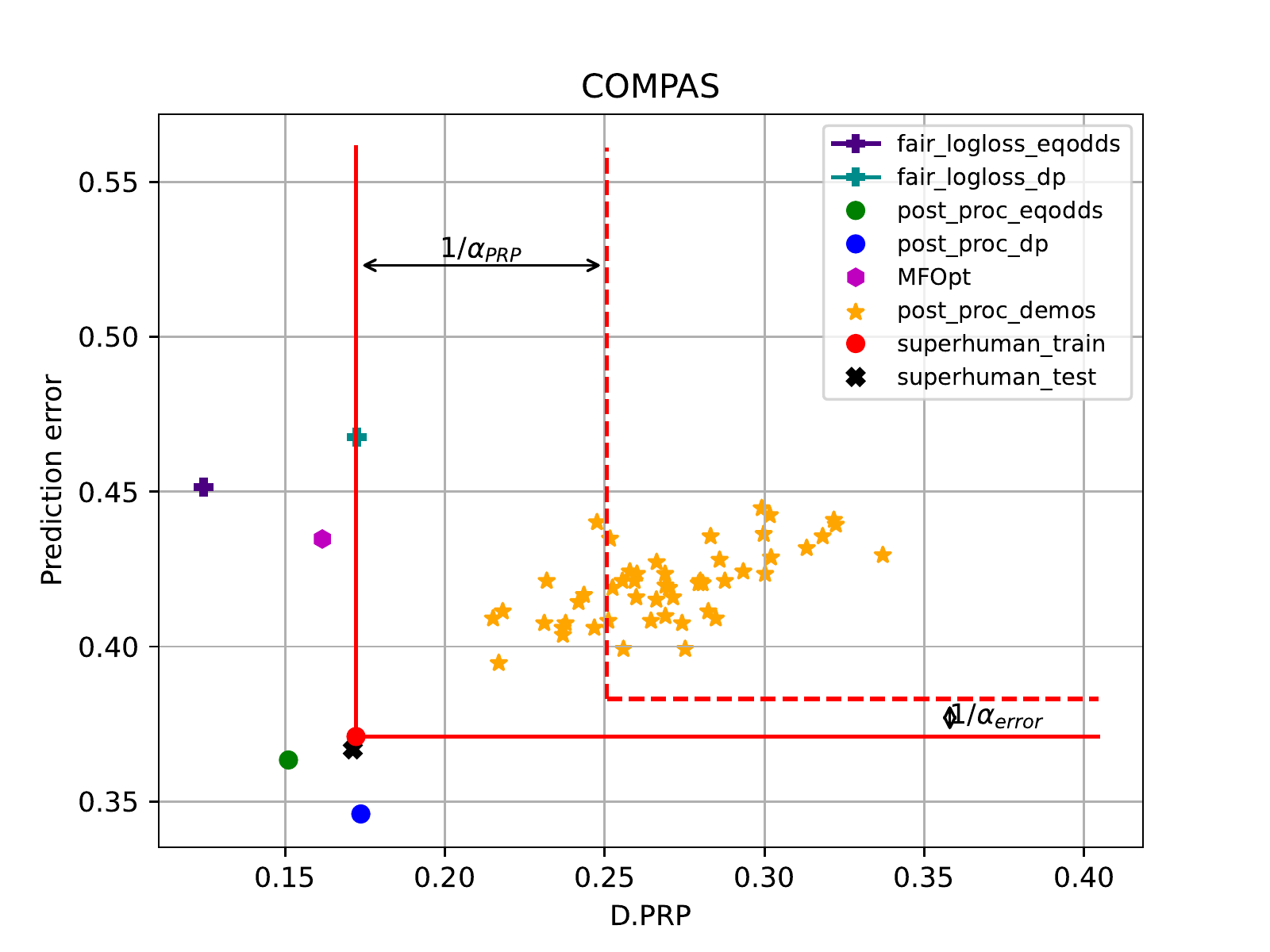}

    \caption{\small
    \emph{Prediction error} versus {\it difference of}: \emph{Demographic Parity} ({\texttt{D.DP}}), \emph{Equalized Odds} ({\texttt{D.EqOdds}}) and \emph{Predictive Rate Parity} ({\texttt{D.PR}}) on test data using noiseless training data ($\epsilon=0$) for \texttt{Adult} (top row) and \texttt{COMPAS} (bottom row) datasets.}
    \label{fig:result1}
\end{figure*}
\R{it seems that the regularization term $\lambda \Vert \alpha \Vert^2$ has been dropped from the objective in calculations of the optimal $\alpha$?}

We show the steps required for the training of our model in Algorithm \ref{alg:train}. 
\emph{Reference decisions} $\{\tilde{\yvec}_i\}_{i=1}^{\text{N}}$ from a human decision-maker or baseline method $\tilde{\Pbb}$ are provided as input to the algorithm. \, $\vec{\theta}$ is set to an initial value. 
In each iteration of the algorithm, we first sample a set of \emph{model predictions} $\{\hat{\yvec}_i\}_{i=1}^{\text{N}}$ from $\hat{\Pbb}_\vec{\theta}(.|\Xvec_i)$ for the matching items used for \emph{reference decisions} $\{\tilde{\yvec}_i\}_{i=1}^{\text{N}}$. We then find the new $\vec{\theta}$ (and $\vec{\alpha}$) based on the algorithms discussed in Theorem \ref{theorem:grad-theta}.

\vspace{-2mm}
\begin{algorithm}

Draw N set of reference decisions $\{\tilde{\yvec}_i\}_{i=1}^{\text{N}}$ from a human decision-maker or baseline method $\tilde{\Pbb}$.
Initialize: $\vec{\theta}
\leftarrow \vec{\theta}_0$\;
\While{$\vec{\theta}$ not converged}{
    Sample model predictions $\{\hat{\yvec}_i\}_{i=1}^{\text{N}}$ from $\hat{\Pbb}_\vec{\theta}(.|\Xvec_i)$ for the matching items used in reference decisions $\{\tilde{\yvec}_i\}_{i=1}^{\text{N}}$\;
  \For{$k \in \{1,...,K\}$}{
   Sort reference decisions $\{\tilde{\yvec}_i\}_{i=1}^{\text{N}}$ in ascending order based on their $k^{\text{th}}$ feature value $f_k(\tilde{\yvec}_i)$: \,  $\{\tilde{\yvec}^{(j)}\}_{j=1}^{\text{N}}$\; 
   Compute $\alpha_k^{(j)}=\frac{1}{f_k\left(\tilde{\yvec}^{(j)}\right)-f_k\left(\hat{\yvec}^{(j)}\right)}$\; 
   $\alpha_k=\underset{\alpha_k^{(m)}}{\argmin } \, m$\\ \text{ such that } $f_k\left(\hat{\yvec}^{(j)}\right) \leq \frac{1}{m} \sum_{j=1}^m f_k\left(\tilde{\yvec}^{(j)}\right)$\;
   Compute $\Gamma_k(\hat{\bf y}_i, \tilde{\Yvec}, \yvec, \vec{a})$\;
   }
   {\small
   $\vec{\theta} \leftarrow \vec{\theta} + \frac{\eta}{N}\sum_{i} \left(\sum_k \Gamma_k(\hat{\bf y}_i, \tilde{\Yvec}, \yvec, \vec{a}) \right) \nabla_\theta\log\hat{\Pbb}_\theta(\hat{\yvec}_i|{\Xvec}_i)$;
   }
 }
 \caption{Subdominance policy gradient optimization}
 \label{alg:train}
\end{algorithm}

\vspace{-3mm}
\subsection{Generalization Bounds}

With a small effort, we extend the generalization bounds based on support vectors developed for inverse optimal control subdominance minimization \cite{ziebart2022towards}.

\begin{theorem} \label{thm:generalization}
A classifier $\hat{\Pbb}_\vec{\theta}$ trained to minimize  $\operatorname{subdom}_{\boldsymbol{\alpha}}\left(\hat{\yvec}, \tilde{\yvec}_i\right)$ on a set of $N$ iid reference decisions has the support vector set $\left\{\bigcup_{ \hat{\yvec} : P_\theta(\hat{\yvec}|\Xvec) > 0} \tilde{\Yvec}_{\text{SV}_k}\left(\hat{\yvec}, \alpha_k\right)\right\}$ defined by the union of support vectors for any decision with support under $\hat{\Pbb}_\vec{\theta}$. Such a classifier is on average $\gamma$-{\bf superhuman} on the population distribution with: $\gamma=1-$ $\frac{1}{N}\left\|\bigcup_{k=1}^K \bigcup_{ \hat{\yvec} : P_\theta(\hat{\yvec}|\Xvec) > 0} \tilde{\Yvec}_{\text{S V}_k}\left(\hat{\yvec}, \alpha_k\right)\right\|$.
\end{theorem}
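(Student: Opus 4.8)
The plan is to adapt the leave-one-out generalization argument used for support vector machines, extending the version in \citet{ziebart2022towards} to the stochastic, policy-gradient classifier $\hat{\Pbb}_\vec{\theta}$. The pivotal observation is that being a support vector in at least one feature dimension is \emph{necessary} for a demonstration to escape strict Pareto domination by the model. Concretely, if $\tilde{\yvec}_i \notin \bigcup_{k}\bigcup_{\hat{\yvec}: P_\theta(\hat{\yvec}|\Xvec)>0} \tilde{\Yvec}_{\text{SV}_k}(\hat{\yvec}, \alpha_k)$, then for every $k$ and every supported output $\hat{\yvec}$ we have $\alpha_k(f_k(\hat{\yvec}) - f_k(\tilde{\yvec}_i)) + 1 < 0$, which (for $\alpha_k > 0$) forces $f_k(\hat{\yvec}) < f_k(\tilde{\yvec}_i)$ in every dimension. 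Hence $\boldsymbol{f}(\hat{\yvec}) \preceq \boldsymbol{f}(\tilde{\yvec}_i)$ holds with probability one over $\hat{\yvec}\sim\hat{\Pbb}_\vec{\theta}$, i.e., the model is surely $1$-superhuman against any non-support-vector demonstration.

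The second ingredient is a stability lemma: deleting a non-support-vector demonstration from the training set leaves the learned parameters $(\vec{\theta}, \vec{\alpha})$ unchanged. This follows because the subdominance objective, the policy gradient of Theorem \ref{theorem:grad-theta}, and the order-statistic solution for each $\alpha_k$ all depend on a demonstration $\tilde{\yvec}$ only through its contribution to the active hinge terms $\Gamma_k$. A non-support-vector contributes zero to every $\Gamma_k$, since its hinge argument is negative for all supported $\hat{\yvec}$, so it exerts no force on the optimizer and can be removed without altering the minimizer.

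With these two facts I would invoke the Luntz--Brailovsky identity, which equates the expected leave-one-out error of a model trained on $N$ points with the expected population risk of a model trained on $N-1$ points. Here the ``error'' on a held-out $\tilde{\yvec}_i$ is the event that the model fails to Pareto-dominate it, and the ``risk'' is $1-\gamma$. By the stability lemma, removing a non-support-vector $\tilde{\yvec}_i$ does not change the model, and by the first step that unchanged model dominates $\tilde{\yvec}_i$; therefore only support vectors can contribute leave-one-out errors, giving
\[
\text{LOO error} \;\le\; \frac{1}{N}\left\|\bigcup_{k=1}^K \bigcup_{\hat{\yvec}: P_\theta(\hat{\yvec}|\Xvec)>0} \tilde{\Yvec}_{\text{SV}_k}(\hat{\yvec}, \alpha_k)\right\|.
\]
Taking expectations over the iid draw of reference decisions and rearranging yields the claimed average $\gamma$-superhumanness with $\gamma = 1 - \frac{1}{N}\|\cdots\|$.

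The main obstacle I anticipate is making the stability lemma rigorous for a \emph{stochastic}, non-convex, policy-gradient-trained classifier rather than a convex SVM with a unique solution. Two points require care: (i) the support-vector set is a union over \emph{all} outputs $\hat{\yvec}$ with positive probability, so I must argue that a demonstration that is a non-support-vector for every such $\hat{\yvec}$ contributes nothing to the \emph{expected}-subdominance gradient; and (ii) the $1/N$ normalization inside the $\alpha_k$ optimization becomes $1/(N-1)$ when a point is deleted, so I must verify, via the order-statistic characterization, that removing an inactive demonstration does not promote a previously inactive demonstration to support-vector status and thereby perturb the solution. Resolving (ii) cleanly is the most delicate part; the convexity of each $\Gamma_k$ in $\alpha_k$ together with the monotone order-statistic structure should rule out such spurious activations.
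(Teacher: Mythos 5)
Your proposal follows essentially the same route as the paper's (much terser) proof: a leave-one-out argument in which non-support-vector demonstrations can be deleted without changing the learned model, are then Pareto-dominated by that unchanged model, and hence contribute zero leave-one-out error, with the Luntz--Brailovsky/Vapnik almost-unbiasedness of leave-one-out closing the bound. Your additional steps---the contrapositive showing non-support-vectors are strictly dominated in every dimension, and the flagged subtleties about stability of the order-statistic $\alpha_k$ solution under deletion---are correct elaborations of details the paper leaves implicit.
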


This generalization bound requires overfitting to the training data so that the $\hat{\Pbb}_\vec{\theta}$ has restricted support (i.e., $\hat{\Pbb}_\vec{\theta}(\hat{\yvec}|\Xvec) = 0$ for many $\hat{\yvec}$) or becomes deterministic.
\vspace{-2mm}
\section{Experiments}
\label{sec:exps}
\begin{figure*}[t]
    \includegraphics[width=.34\textwidth]{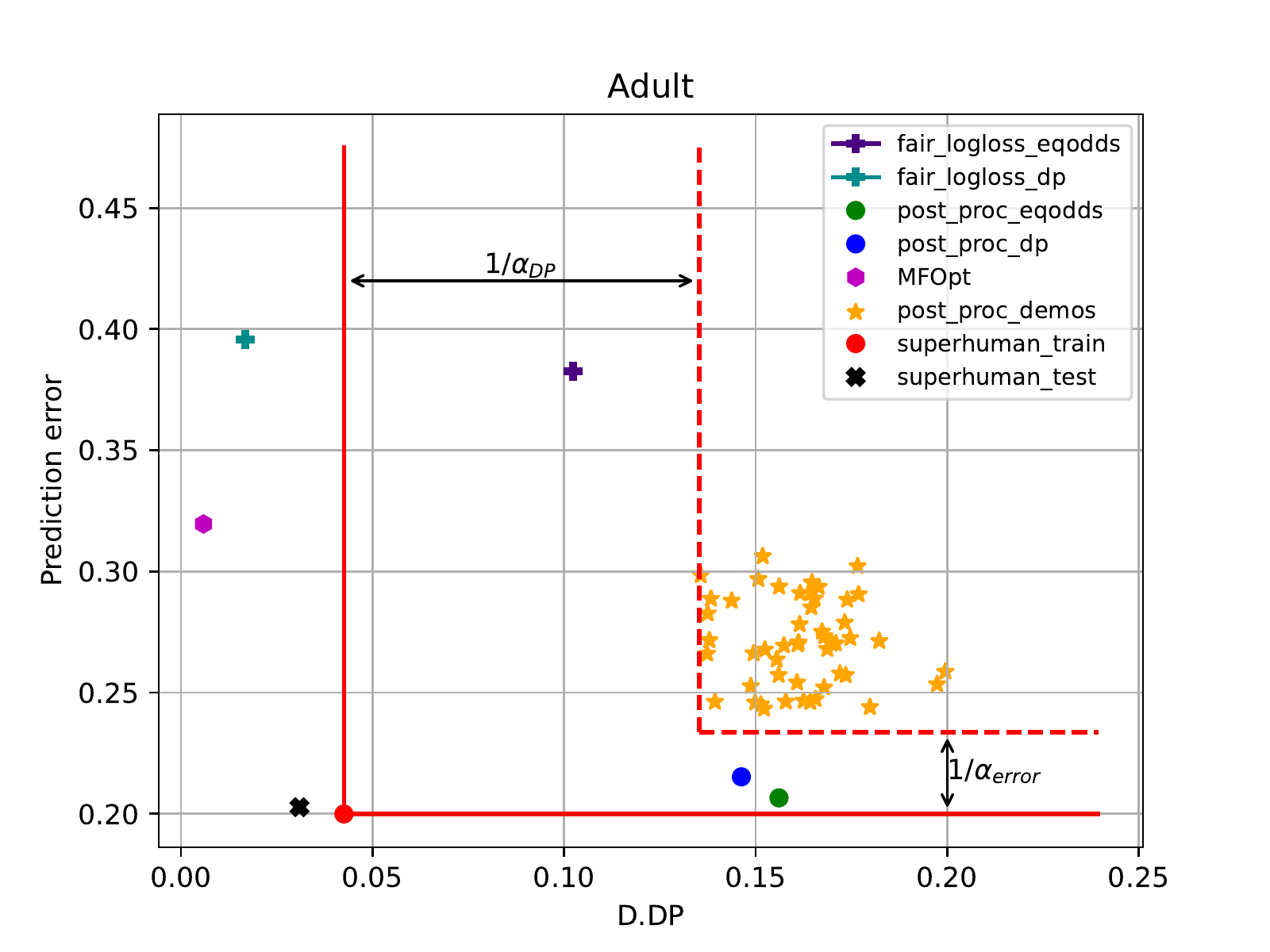} 
    \includegraphics[width=.34\textwidth]{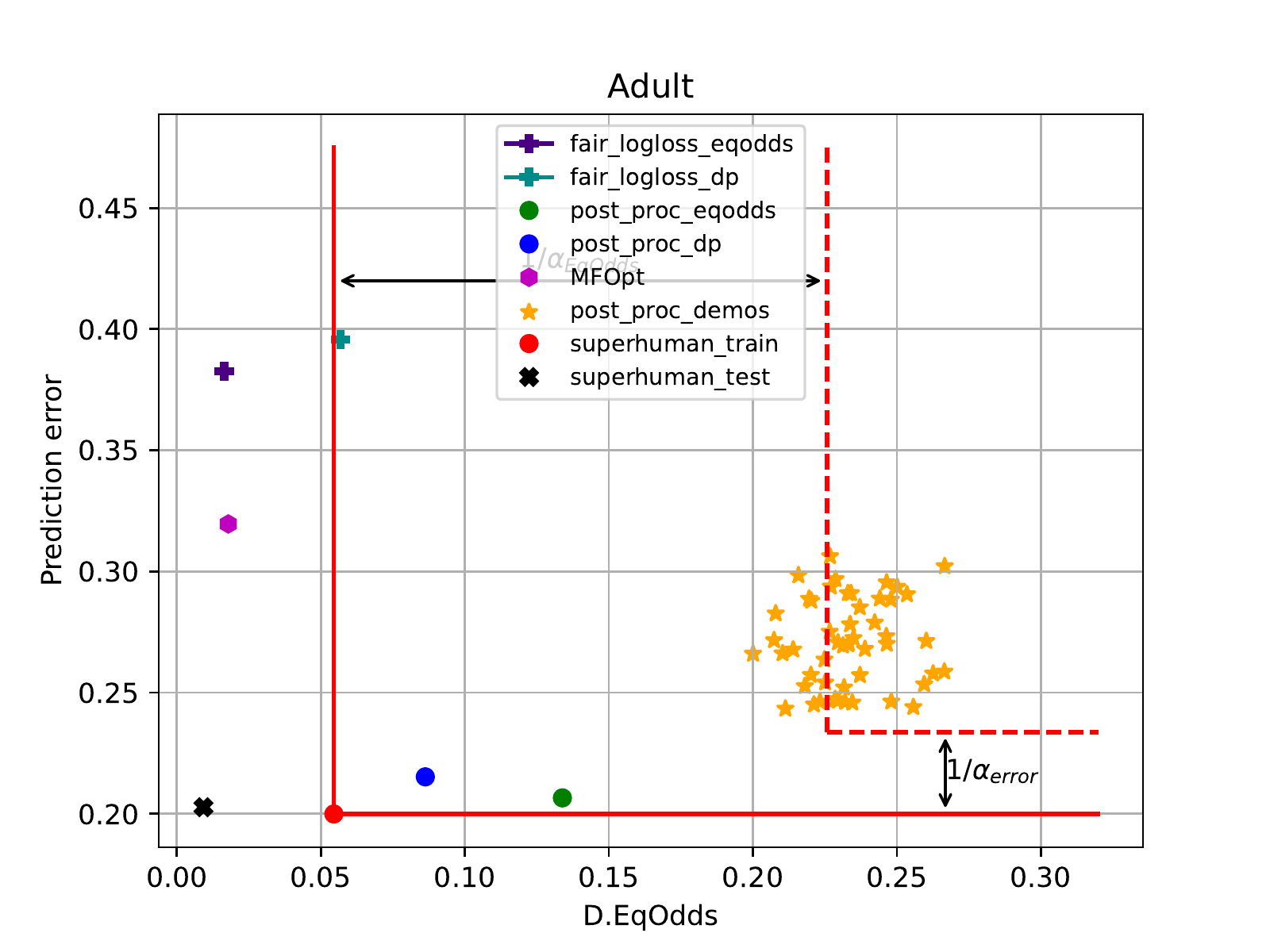}
    \includegraphics[width=.34\textwidth]{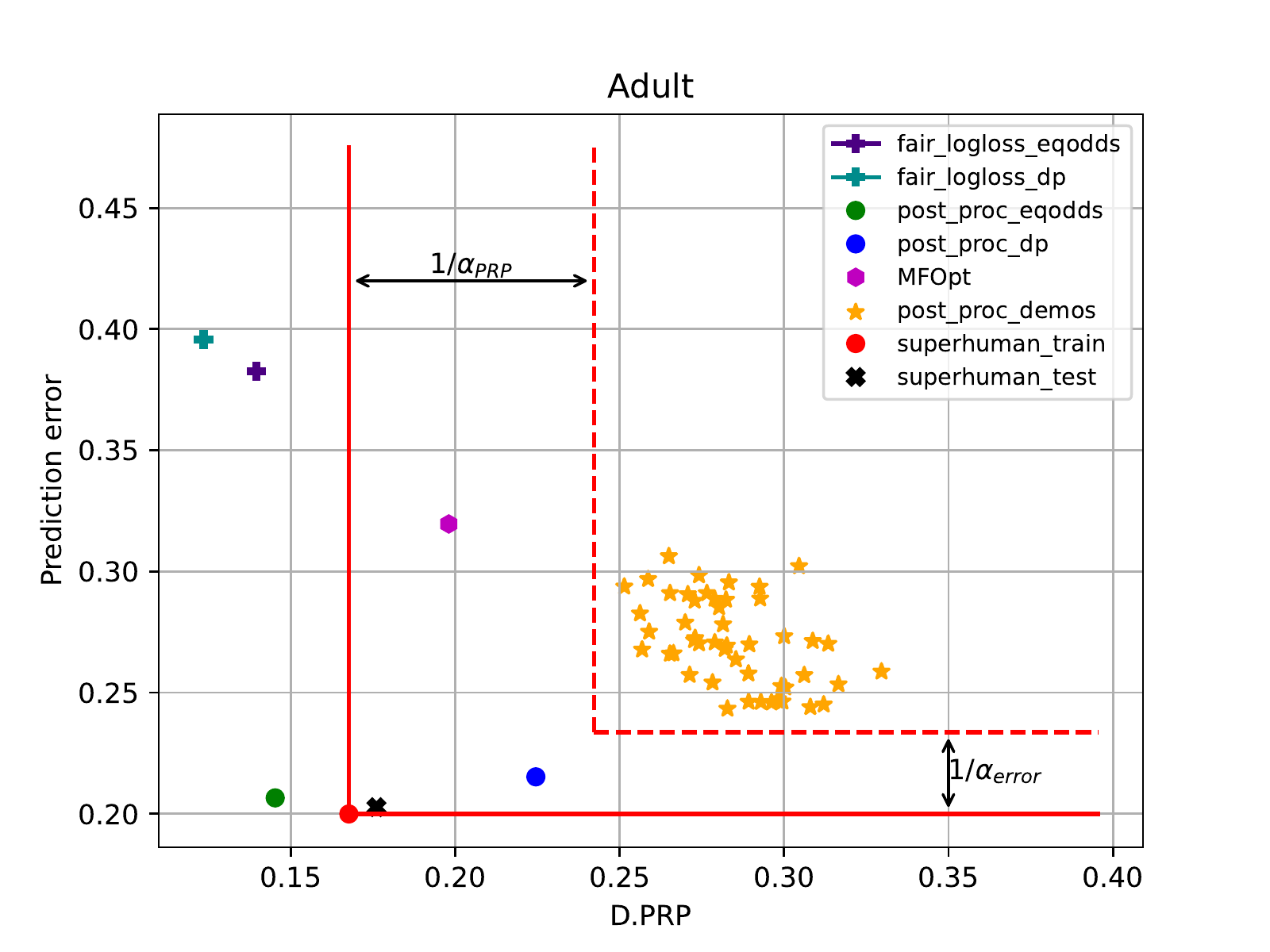}\\
    \includegraphics[width=.34\textwidth]{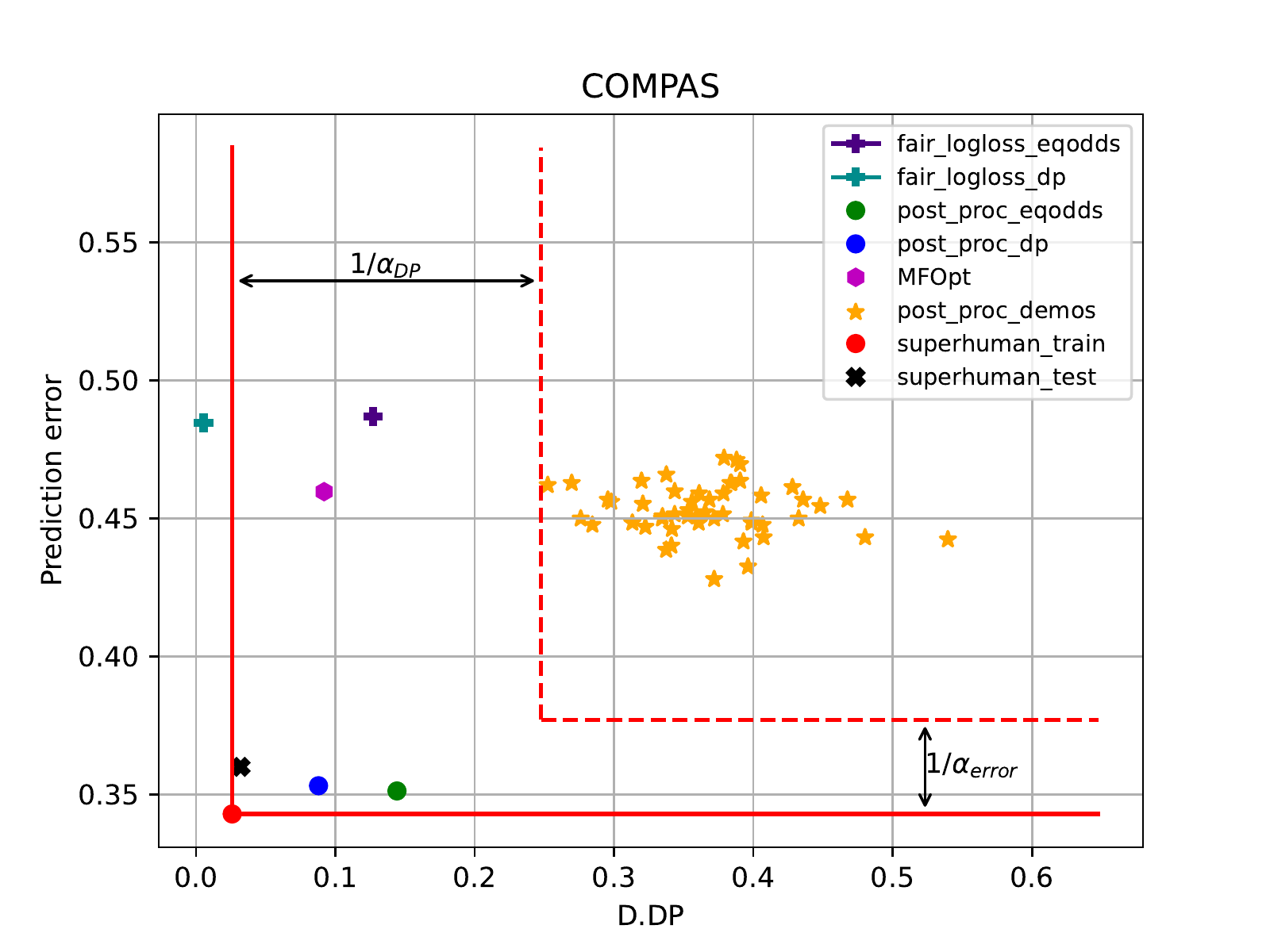} 
    \includegraphics[width=.34\textwidth]{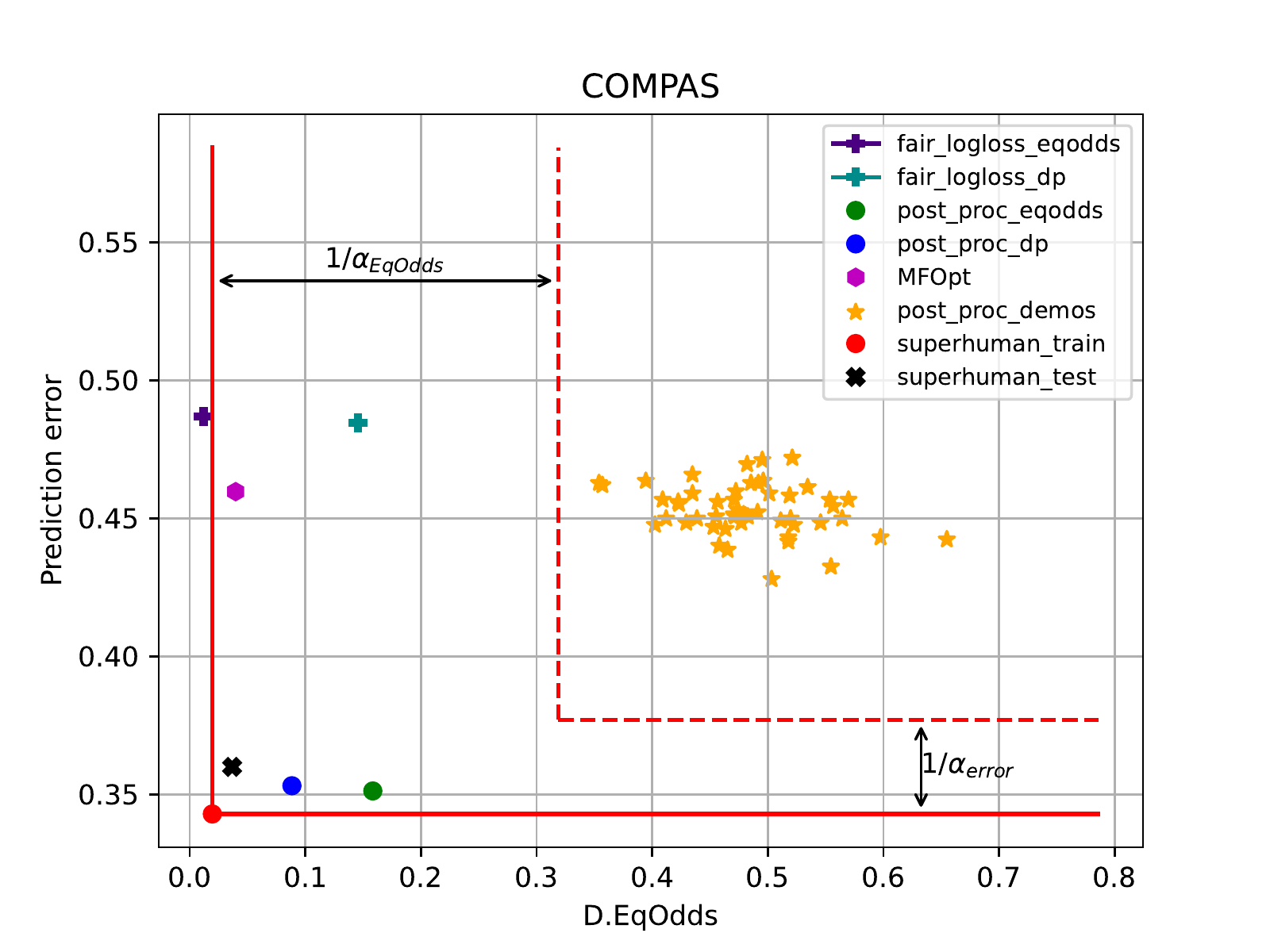}
    \includegraphics[width=.34\textwidth]{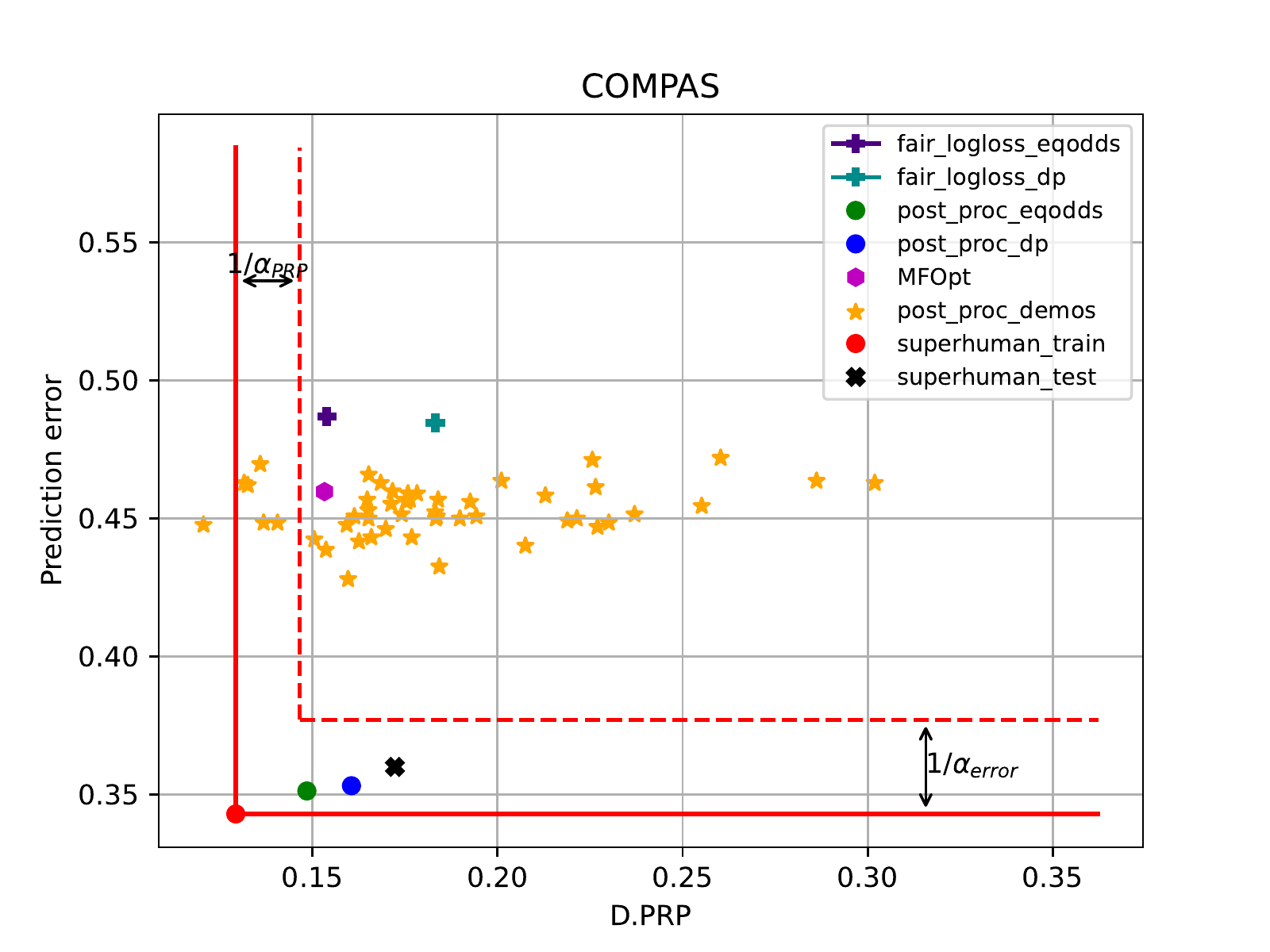}
    \caption{\small
    Experimental results on the {\tt Adult} and {\tt COMPAS} datasets with noisy demonstrations ($\epsilon=0.2$). Margin boundaries are shown with dotted red lines. Each plot shows the relationships between two features.}
    \label{fig:result2}
\end{figure*}

The goal of our approach is to produce a fairness-aware prediction method that outperforms reference (human) decisions on multiple fairness/performance measures.
In this section, we discuss our experimental design to synthesize reference decisions with varying levels of noise, evaluate our method, and provide comparison baselines.
\subsection{Training and Testing Dataset Construction}
To emulate human decision-making with various levels of noise, we add noise to the ground truth data of benchmark fairness datasets and apply fair learning methods over repeated randomized dataset splits. 
We describe this process in detail in the following section.
\vspace{-2mm}
\paragraph{Datasets} We perform experiments on 
two benchmark fairness datasets:
\begin{itemize}
\item 
UCI {\tt Adult} dataset \cite{Uci2017} considers predicting whether a household's income is higher than \$50K/yr based on census data. Group membership is based on gender. The dataset consists of 45,222 items.
\item {\tt COMPAS} dataset \cite{larson2016we} considers predicting recidivism with group membership based on race. It consists of 6,172 examples.
\end{itemize}

\vspace{-2mm}
\paragraph{Partitioning the data}
We first split entire dataset randomly into a disjoint train ({\tt train-sh}) and test ({\tt test-sh}) set of equal size. The test set ({\tt test-sh}) is entirely withheld from the training procedure and ultimately used solely for evaluation. To produce each demonstration (a vector of reference decisions), we split the ({\tt train-sh}) set, randomly into a disjoint train ({\tt train-pp}) and test ({\tt test-pp}) set of equal size.
\vspace{-2mm}
\paragraph{Noise insertion}
We randomly flip $\epsilon\%$ of the ground truth labels ${\bf y}$ and group membership attributes ${\bf a}$ to add noise to our demonstration-producing process. 
\vspace{-2mm}
\paragraph{Fair classifier $\tilde{\Pbb}$:} 
Using the noisy data, we provide existing fairness-aware methods with 
labeled {\tt train-pp} data and unlabeled {\tt test-pp}
to produce decisions on the {\tt test-pp} data as demonstrations $\tilde{\bf y}$.
Specifically, we employ:
\begin{itemize}
\item The {\bf Post-processing} method of \citet{hardt2016equality}, which aims to reduce both \emph{prediction error} and \{\emph{demographic parity} or \emph{equalized odds}\} at the same time. 
We use \emph{demographic parity} as the fairness constraint. We produce demonstrations using this method for {\tt Adult} dataset.
\item {\bf Robust fairness for logloss-based classification} \cite{rezaei2020fairness} employs distributional robustness to match target fairness constraint(s) while robustly minimizing the log loss. We use \emph{equalized odds} as the fairness constraint. We employ this method to produce demonstrations for {\tt COMPAS} dataset.


\end{itemize}
We repeat the process of partitioning {\tt train-sh} $\texttt{N}=50$ times to create randomized partitions of {\tt train-pp} and 
{\tt test-pp}
and then produce a set of demonstrations $\{\tilde{\yvec}\}_{i=1}^{50}$. 

\subsection{Evaluation Metrics and Baselines}
\paragraph{Predictive Performance and Fairness Measures}
Our focus for evaluation is on outperforming demonstrations in multiple fairness and performance measures. We use $K=4$ measures: \emph{inaccuracy} ({\tt Prediction error}), \emph{difference from demographic parity} ({\tt D.DP}), 
\emph{difference from equalized odds} ({\tt D.EqOdds}), \emph{difference from predictive rate parity} ({\tt D.PRP}). 
\vspace{-2mm}
\paragraph{Baseline methods}
As baseline comparisons, we train five different models on the entire train set ({\tt train-sh}) and then evaluate them on the withheld test data ({\tt test-sh}): 
\begin{itemize}
    \item The {\bf Post-processing} model of \cite{hardt2016equality} with \emph{demographic parity} as the fairness constraint ({\tt post\_proc\_dp}).
    \item The {\bf Post-processing} model of \cite{hardt2016equality} with \emph{equalized odds} as the fairness constraint ({\tt post\_proc\_eqodds}).
    \item The {\bf Robust Fair-logloss} model of \cite{rezaei2020fairness} with \emph{demographic parity} as the fairness constraint ({\tt fair\_logloss\_dp}).
    \item The {\bf Robust Fair-logloss} model of \cite{rezaei2020fairness} \emph{equalized odds} as the fairness constraint ({\tt fair\_logloss\_eqodds}).
    \item The {\bf Multiple Fairness Optimization} framework of \citet{hsu2022pushing} which is designed to satisfy three conflicting fairness metrics (\emph{demographic parity}, \emph{equalized odds} and \emph{predictive rate parity}) to the best extent possible ({\tt MFOpt}). 
\end{itemize}

\begin{table*}[th]
\caption{Experimental results on noise-free datasets, along with the $\alpha_k$ values learned for each feature in subdominance minimization.}
\label{tab:result1}
\centering
\resizebox{\textwidth}{!}{
\begin{tabular}{cccccccccc}
\toprule
        
        \multirow{2}{*}{\diagbox{Method}{Dataset}}
        &
        \multicolumn{4}{c}{\parbox[c]{5.4cm}{\centering \small {\tt Adult}}} 
        & 
        \multicolumn{4}{c}{\parbox[c]{5.4cm}{\centering \small {\tt COMPAS}}} 
        \\
        \cmidrule(lr){3-6} \cmidrule(lr){7-10} 
\rule{0pt}{1\normalbaselineskip}
    & & \multicolumn{1}{c}{Prediction error} & \multicolumn{1}{c}{DP diff} & \multicolumn{1}{c}{EqOdds diff} & \multicolumn{1}{c}{PRP diff} & \multicolumn{1}{c}{Prediction error} & \multicolumn{1}{c}{DP diff} & \multicolumn{1}{c}{EqOdds diff} & \multicolumn{1}{c}{PRP diff} \\ \hline \rule{0pt}{1\normalbaselineskip}
$\alpha_k$                      &  & 62.62 & 35.93 & 6.46 & 4.98 & 82.5 & 4.27 & 3.15 & 12.72\\
{\tt $\gamma$-{\bf superhuman}}          &   & 98\%      & 94\%  & 100\%  & 100\% & 100\% & 100\% & 100\% & 100\%  \\ \hline
{\tt MinSub-Fair} (ours)  &    & 0.210884  & 0.025934  & {\bf 0.006690}  & 0.183138 & 0.366806 & 0.040560 & 0.124683 & 0.171177\\
{\tt MFOpt} & &   {\bf 0.195696} & 0.063152 & 0.077549 & 0.209199 & 0.434743 & 0.005830 & 0.069519 & 0.161629 \\ 
{\tt post\_proc\_dp} & & 0.212481 & 0.030853 & 0.220357 & 0.398278 & 0.345964 & 0.010383 & 0.077020 & 0.173689\\
{\tt post\_proc\_eqodds} & & 0.213873 & 0.118802 & 0.007238 & 0.313458  & {\bf 0.363395} & 0.041243 & 0.060244 &  0.151040\\
{\tt fair\_logloss\_dp} & &   0.281194 & {\bf 0.004269} & 0.047962 & 0.124797 & 0.467610 & {\bf 0.000225} & 0.071392 &  0.172418\\ 
{\tt fair\_logloss\_eqodds} & &   0.254060 & 0.153543 & 0.030141 & {\bf 0.116579} & 0.451496 & 0.103093 & {\bf 0.029085} & {\bf 0.124447} \\

\hline
\end{tabular}
}
\end{table*}

\vspace{-2mm}
\paragraph{Hinge Loss Slopes}
As discussed previously, $\alpha_k$ value corresponds to the hinge loss slope, which defines by how far a produced decision does not sufficiently outperform the demonstrations on the $k_{\text{th}}$ feature. When the $\alpha_k$ is large, the model chooses heavily weights support vector reference decisions for that particular $k$ when minimizing subdominance. We report these values in our experiments.
\vspace{-2mm}
\subsection{Superhuman Model Specification and Updates}
We use a \emph{logistic regression} model $\Pbb_{\vec{\theta}_0}$ 
with first-order moments feature function, $\phi(y, \mathbf{x}) = [x_1y, x_2y, \dots x_my]^\top$, and weights $\vec{\theta}$ applied independently on each item as our decision model.
 During the training process, we update the model parameter $\vec{\theta}$ to reduce subdominance. 
\vspace{-2mm}
\paragraph{Sample from Model $\hat{\Pbb}_{\vec{\theta}}$}
In each iteration of the algorithm, we first sample \emph{prediction vectors} $\{\hat{\yvec}_i\}_{i=1}^{\text{N}}$ from $\hat{\Pbb}_\vec{\theta}(.|\Xvec_i)$ for the matching items used in demonstrations $\{\tilde{\yvec}_i\}_{i=1}^{\text{N}}$. In the implementation, to produce the $i_{\text{th}}$ sample, we look up the indices of the items used in $\tilde{\yvec}_i$, which constructs item set $\Xvec_i$. Now we make predictions using our model on this item set $\hat{\Pbb}_\vec{\theta}(.|\Xvec_i)$. The model produces a probability distribution for each item which can be sampled and used as a prediction $\{\hat{\yvec}_i\}_{i=1}^{\text{N}}$.

\paragraph{Update model parameters $\vec{\theta}$}
We update $\vec{\theta}$ until convergence using Algorithm \ref{alg:train}. 
For our logistic regression model, the gradient is:
\begin{align}
    \nabla_\theta \log \hat{\Pbb}_\theta(\hat{\yvec}|\Xvec) = \phi(\hat{\yvec},\Xvec) - \mathbb{E}_{\hat{\yvec}|\Xvec \sim \hat{P}_\theta}\left[\phi(\hat{\bf y},\Xvec)\right],\notag
\end{align}
where $\phi$ denotes the feature function and $\phi(\hat{\yvec},\Xvec)=\sum_{m=1}^{\text{M}} \phi(\hat{\yvec}_{m},\xvec_{m})$ is the corresponding feature function for the $i^{\text{th}}$ set of reference decisions.

\OM{Mention that we choose learning rate as $\eta = 0.01$}

\begin{table*}[th]
\caption{Experimental results on datasets with noisy demonstrations, along with the $\alpha_k$ values learned for each feature.}
\label{tab:result2}
\centering
\resizebox{\textwidth}{!}{
\begin{tabular}{cccccccccc}
\toprule
        
        \multirow{2}{*}{\diagbox{Method}{Dataset}}
        &
        \multicolumn{4}{c}{\parbox[c]{5.4cm}{\centering \small {\tt Adult}}} 
        & 
        \multicolumn{4}{c}{\parbox[c]{5.4cm}{\centering \small {\tt COMPAS}}} 
        \\
        \cmidrule(lr){3-6} \cmidrule(lr){7-10} 
\rule{0pt}{1\normalbaselineskip}
    & & \multicolumn{1}{c}{Prediction error} & \multicolumn{1}{c}{DP diff} & \multicolumn{1}{c}{EqOdds diff} & \multicolumn{1}{c}{PRP diff} & \multicolumn{1}{c}{Prediction error} & \multicolumn{1}{c}{DP diff} & \multicolumn{1}{c}{EqOdds diff} & \multicolumn{1}{c}{PRP diff} \\ \hline \rule{0pt}{1\normalbaselineskip}
$\alpha_k$                      &  & 29.63 & 10.77 & 5.83 & 13.42 & 29.33 & 4.51 & 3.34 & 57.74\\
{\tt $\gamma$-{\bf superhuman}}          &   & 100\%      & 100\%  & 100\%  & 100\%  & 100\% & 100\% & 100\% & 98\% \\ \hline
{\tt MinSub-Fair} (ours)  &    & 0.202735      & 0.030961    & {\bf 0.009263}  & 0.176004 & 0.359985 & 0.031962 & 0.036680 & 0.172286\\ 
{\tt MFOpt} & &   {\bf 0.195696} & 0.063152 & 0.077549 & 0.209199 & 0.459731 & 0.091892 & 0.039745 & 0.153257 \\ 
{\tt post\_proc\_dp} & & 0.225462 & 0.064232 & 0.237852 & 0.400427 & 0.353164 & 0.087889 & 0.088414 & 0.160538 \\
{\tt post\_proc\_eqodds} & & 0.224561 & 0.103158 & 0.010552 & 0.310070 & {\bf 0.351269} & 0.144190 & 0.158372 & {\bf 0.148493}\\
{\tt fair\_logloss\_dp} & &   0.285549 & {\bf 0.007576} & 0.057659 & {\bf 0.115751} &  0.484620 & {\bf 0.005309} & 0.145502 & 0.183193 \\ 
{\tt fair\_logloss\_eqodds} & & 0.254577 & 0.147932 & 0.012778 & 0.118041 & 0.487025 & 0.127163 & {\bf 0.011918} & 0.153869 \\

\hline
\end{tabular}
}
\end{table*}

\subsection{Experimental Results}

After training each model, e.g., obtaining the best model weight $\vec{\theta}^*$ from the training data ({\tt train-sh}) for {\tt superhuman}, we evaluate each 
on unseen test data ({\tt test-sh}). 
We employ hard predictions (i.e., the most probable label) using our approach at time time rather than randomly sampling.
\begin{figure}[t]
\begin{center}
\begin{minipage}{0.9\columnwidth}
\includegraphics[width=0.9\columnwidth]{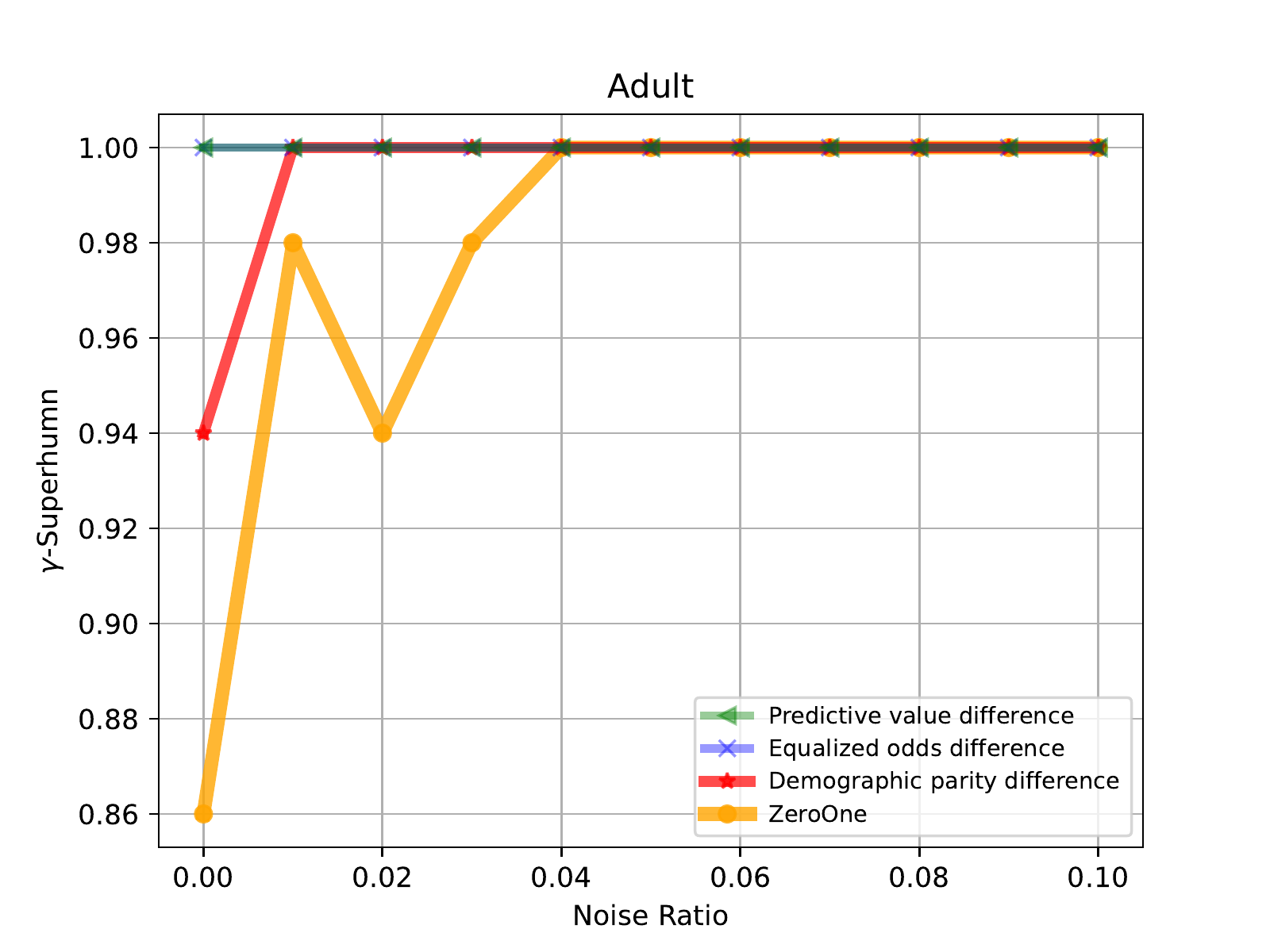} 
\end{minipage}
\begin{minipage}{0.9\columnwidth}
\includegraphics[width=0.9\columnwidth]{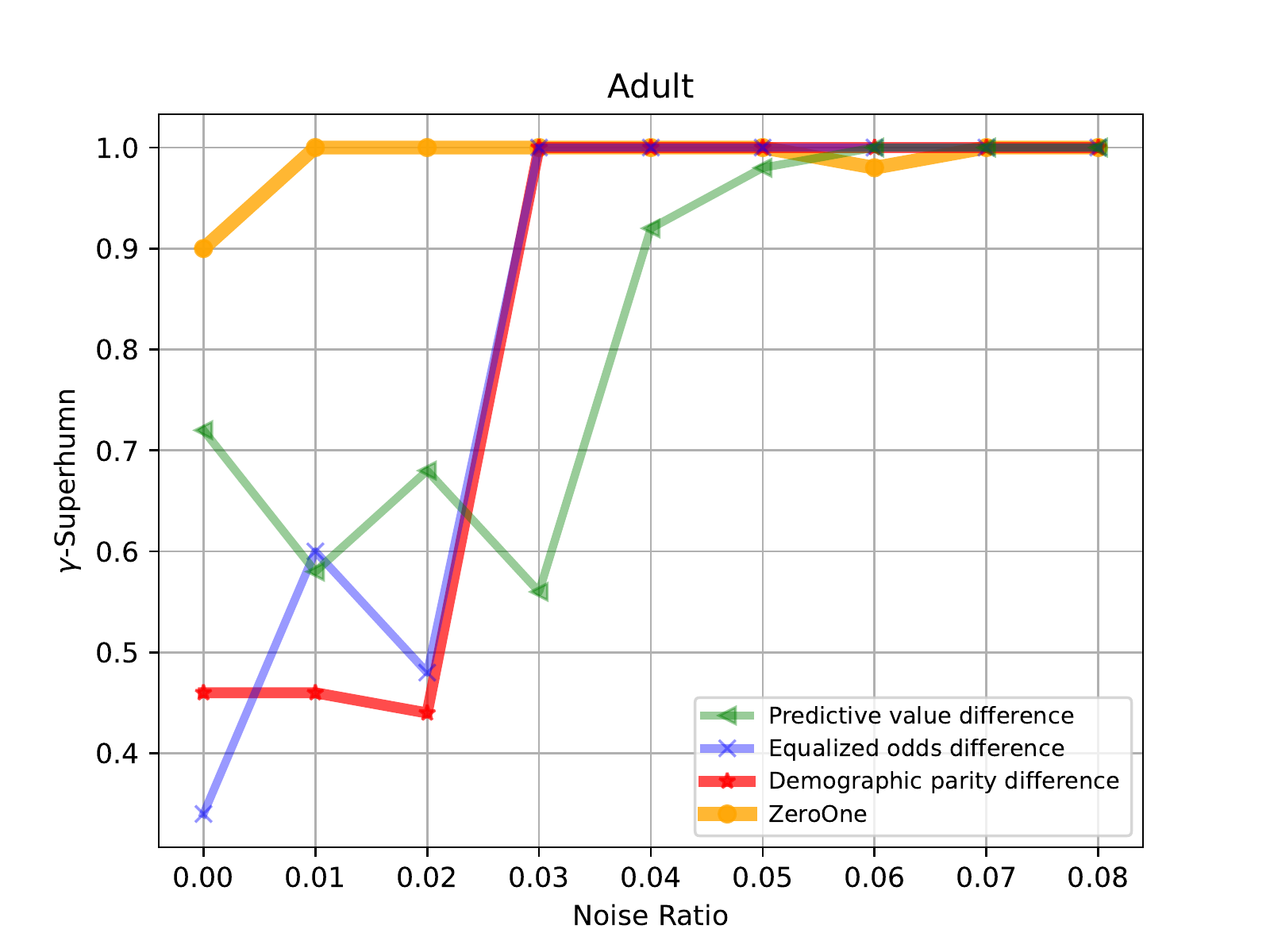}
\end{minipage}
\begin{minipage}{0.95\columnwidth}
\vspace{-2mm}
\captionof{figure}{The relationship between the ratio of augmented noise in the label and the protected attribute of reference decisions produced by post-processing (upper) and fair-logloss (lower) and achieving $\gamma$-superhuman performance in our approach.}
\label{fig:noise-gamma}
\end{minipage}
\vspace{-7mm}
\end{center}
\end{figure}

\vspace{-1mm}
\begin{table*}[th]
\vspace{-1mm}
\caption{Percentage of reference demonstrations that each method outperforms in all prediction/fairness measures.}
\centering
\begin{tabular}{ccccc}
\hline
Method    & \multicolumn{1}{c}{{\tt Adult}($\epsilon=0.0$)} & \multicolumn{1}{c}{{\tt Adult}($\epsilon=0.2$)} & \multicolumn{1}{c}{{\tt COMPAS}($\epsilon=0.0$)} & \multicolumn{1}{c}{{\tt COMPAS}($\epsilon=0.2$)} \\ \hline
{\tt MinSub-Fair} (ours)     & {\bf 96\%}   & {\bf 100\%}   & {\bf 100\%} &  {\bf 98\%} \\
{\tt MFOpt}     & 42\%  &  0\% & 18\% &  18\%  \\
{\tt post\_proc\_dp}       & 16\%  & 86\%  & {\bf 100\%} & 80\%   \\
{\tt post\_proc\_eqodds} &  0\% & 66\% & {\bf 100\%} & 88\% \\ 
{\tt fair\_logloss\_dp}  & 0\% & 0\% & 0\% & 0\% \\
{\tt fair\_logloss\_eqodds} & 0\% & 0\% & 0\% & 0\% \\
\hline
\end{tabular}
\label{tab:all_gamma}
\end{table*}

\paragraph{Noise-free reference decisions}
Our first set of experiments considers learning from reference decisions with no added noise.
The results are shown in Figure \ref{fig:result1}. We observe that our approach outperforms demonstrations in all fairness metrics and shows comparable performance in \emph{accuracy}. The ({\tt post\_proc\_dp}) performs almost as an average of demonstrations in all dimensions, hence our approach can outperform it in all fairness metrics. In comparison to ({\tt post\_proc\_dp}), our approach can outperform in all fairness metrics but is slightly worse in \emph{prediction error}. 

We show the experiment results along with $\alpha_k$ values in Table \ref{tab:result1}. 
Note that the margin boundaries (dotted red lines) in Figure \ref{fig:result1} are equal to $\frac{1}{\alpha_k}$ for feature $k$, hence there is reverse relation between $\alpha_k$ and margin boundary for feature $k$. 
We observe larger values of $\alpha_k$ for \emph{prediction error} and \emph{demographic parity difference}. The reason is that these features are already optimized in demonstrations and our model has to increase $\alpha_k$ values for those features to sufficiently outperform them.

\paragraph{Noisy reference decisions}
In our second set of experiments, we introduce significant amounts of noise ($\epsilon=0.2$) into our reference decisions.
The results for these experiments are shown in Figure \ref{fig:result2}. We observe that in the case of learning from noisy demonstrations, our approach still outperforms the reference decisions. The main difference here is that due to the noisy setting, demonstrations have worse \emph{prediction error} but regardless of this issue, our approach still can achieve a competitive \emph{prediction error}. We show the experimental results along with $\alpha_k$ values in Table \ref{tab:result2}.

\paragraph{Relationship of noise to superhuman performance} We also evaluate the relationship between the amount of augmented noise in the label and protected attribute of demonstrations, with achieving $\gamma$-superhuman performance in our approach. As shown in Figure \ref{fig:noise-gamma}, with slightly increasing the amount of noise in demonstrations, our approach can outperform $100\%$ of demonstrations and reach to $1$-superhuman performance. In Table \ref{tab:all_gamma} we show the percentage of demonstrations that each method can outperform across all prediction/fairness measures (i.e., the $\gamma-$superhuman value).

\section{Conclusions}
\label{conclusion}
In this paper, we introduce superhuman fairness,
an approach to fairness-aware classifier construction based on imitation learning. 
Our approach avoids explicit performance-fairness trade-off specification or elicitation.
Instead, it seeks to unambiguously outperform human decisions across multiple performance and fairness measures with maximal frequency.
We develop a general framework for pursuing this based on subdominance minimization \cite{ziebart2022towards} and policy gradient optimization methods \cite{sutton2018reinforcement} that enable a broad class of probabilistic fairness-aware classifiers to be learned.
Our experimental results show the effectiveness of our approach in outperforming synthetic decisions corrupted by small amounts of label and group-membership noise when evaluated using multiple fairness criteria combined with predictive accuracy. 

\paragraph{Societal impacts} By design, our approach has the potential to identify fairness-aware decision-making tasks in which human decisions can frequently be outperformed by a learned classifier on a set of provided performance and fairness measures.
This has the potential to facilitate a transition from manual to automated decisions that are preferred by all interested stakeholders, so long as their interests are reflected in some of those measures.
However, our approach has limitations.
First, when performance-fairness tradeoffs can either be fully specified (e.g., based on first principles) or effectively elicited, fairness-aware classifiers optimized for those trade-offs should produce better results than our approach, which operates under greater uncertainty cast by the noisiness of human decisions.
Second, if target fairness concepts lie outside the set of metrics we consider, our resulting fairness-aware classifier will be oblivious to them.
Third, our approach assumes human-demonstrated decision are well-intentioned, noisy reflections of desired performance-fairness trade-offs. If this is not the case, then our methods could succeed in outperforming them across all fairness measures, but still not provide an adequate degree of fairness.

\paragraph{Future directions}
We have conducted experiments with a relatively small number of performance/fairness measures using a simplistic logistic regression model.
Scaling our approach to much larger numbers of measures and classifiers with more expressive representations are both of great interest.
Additionally, we plan to pursue experimental validation using human-provided fairness-aware decisions in addition to the synthetically-produced decisions we consider in this paper.

\newpage
\bibliography{references}
\bibliographystyle{icml2023}

\appendix
\onecolumn
\section{Proofs of Theorems}

\begin{proof}[Proof of Theorem \ref{theorem:grad-theta}]
The gradient of the training objective with respect to model parameters $\theta$ is:
\begin{align}
&\nabla_\theta \mathbb{E}_{\hat{\bf y}|\vec{X} \sim \hat{P}_\theta}\left[\sum_k \overbrace{\min_{\alpha_k} \left(
\operatorname{subdom}^k_{\alpha_k}\left(\hat{\bf y}, \tilde{\Yvec}, \yvec, \vec{a}\right)+\lambda_k\alpha_k\right)}^{\Gamma_k\left(\hat{\yvec}, \tilde{\Yvec}, \yvec, \vec{a}\right)}\right] 
=\mathbb{E}_{\hat{\bf y}|{\bf X} \sim \hat{P}_\theta}\bigg[\left(\sum_k \Gamma_k(\hat{\bf y}, \tilde{\Yvec}, \yvec, \vec{a})\right) \nabla_\theta\log \hat{\Pbb}_\theta(\hat{\bf y}|{\Xvec})\bigg], \notag
\end{align}
which follows directly from a property of gradients of logs of function:
\begin{align}
\nabla_\theta \log \hat{\mathbb{P}}(\hat{\yvec}|\Xvec) =
\frac{1}{\hat{\mathbb{P}}(\hat{\yvec}|\Xvec)} \nabla_\theta \hat{\mathbb{P}}(\hat{\yvec}|\Xvec) \implies
\nabla_{\theta} \hat{\mathbb{P}}_\theta(\hat{\yvec}|\Xvec) = \hat{\mathbb{P}}(\hat{\yvec}|\Xvec) \nabla_\theta \log \hat{\mathbb{P}}(\hat{\yvec}|\Xvec).
\end{align}
We note that this is a well-known approach employed by policy-gradient methods in reinforcement learning \cite{sutton2018reinforcement}.

Next, we consider how to obtain the $\alpha-$minimized subdominance for a particular tuple ($\hat{\yvec}$,$\tilde{\Yvec}$,$\yvec$,${\bf a}$),
$\Gamma_k\left(\hat{\yvec}, \tilde{\Yvec}, \yvec, \vec{a}\right)=
\min_{\alpha_k} \left(
\operatorname{subdom}^k_{\alpha_k}\left(\hat{\bf y}, \tilde{\Yvec}, \yvec, \vec{a}\right)+\lambda_k\alpha_k\right)$,
analytically.

First, we note that $\operatorname{subdom}^k_{\alpha_k}\left(\hat{\bf y}, \tilde{\Yvec}, \yvec, \vec{a}\right)+\lambda_k\alpha_k$ is comprised of hinged linear functions of $\alpha_k$, making it a convex and piece-wise linear function of $\alpha_k$.
This has two important implications: (1) any point of the function for which the subgradient includes $0$ is a global minimum of the function \cite{boyd2004convex}; (2) an optimum must exist at a corner of the function: $\alpha_k=0$ or where one of the hinge functions becomes active: 
\begin{align}
\alpha_k(f_k(\hat{\yvec}_i)-f_k(\tilde{\yvec}_i))+1 = 0 \implies
\alpha_k = \frac{1}{f_k(\tilde{\yvec}_i)-f_k(\hat{\yvec}_i)}.
\end{align}
The subgradient for the $j^{\text{th}}$ of these points (ordered by $f_k$ value from smallest to largest and denoted $f_k(\tilde{\yvec}^{(j)})$ for the demonstration) is:
\begin{align}
& \partial_{\alpha_k}
\operatorname{subdom}^k_{\alpha_k}\left(\hat{\bf y}, \tilde{\Yvec}, \yvec, \vec{a}\right)\Big|_{\alpha_k=(f_k(\hat{\yvec})-f_k(\tilde{\yvec}^{(j)}))^{-1}} =\partial_{\alpha_k}\left(\frac{1}{N}\sum_{i=1}^{j}\bigg[\alpha_k\bigg(f_k(\hat{\yvec})-f_k(\tilde{\yvec}^{(i)})\bigg)+1\bigg]_{+}+\lambda\alpha_k \right)
\notag\\
& = \lambda + \frac{1}{N}\sum_{i=1}^{j-1} \bigg(f_k(\hat{\yvec})-f_k(\tilde{\yvec}^{(i)})\bigg) + \left[0, f_k(\hat{\yvec})-f_k(\tilde{\yvec}^{(j)})\right], \notag
\end{align}
where the final bracketed expression indicates the range of values added to the constant value preceding it.

The smallest $j$ for which the largest value in this range is positive must contain the $0$ in its corresponding range, and is thus the provides the $j$ value for the optimal $\alpha_k$ value. 
\end{proof}

\begin{proof}[Proof of Theorem \ref{thm:generalization}]
We extend the leave-one-out generalization bound of \citet{ziebart2022towards} by considering the set of reference decisions that are support vectors for any learner decisions with non-zero probability.
For the remaining reference decisions that are not part of this set, removing them from the training set would not change the optimal  model choice and thus contribute zero error to the leave-one-out cross validation error, which is an almost unbiased estimate of the generalization error \cite{vapnik2000bounds}.

\end{proof}

\section{Additional Results}
In the main paper, we only included plots that show the relationship of a fairness metric with \emph{prediction error}. To show the relation between each pair of fairness metrics, in Figures \ref{fig:result3} and \ref{fig:result4} we show the remaining plots removed from Figures \ref{fig:result1} and \ref{fig:result2} respectively.

\begin{figure*}[t]
    \includegraphics[width=.34\textwidth]{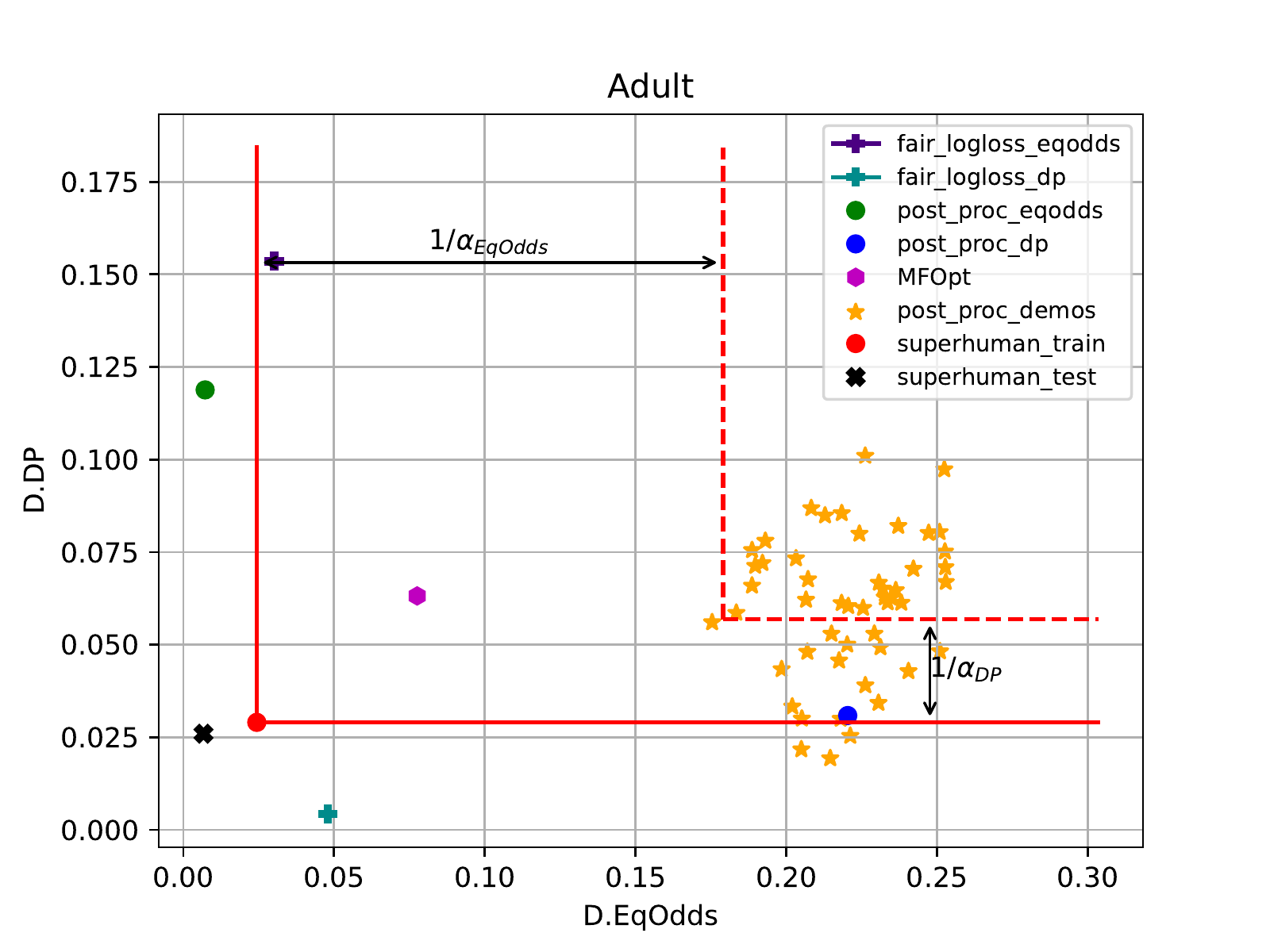} 
    \includegraphics[width=.34\textwidth]{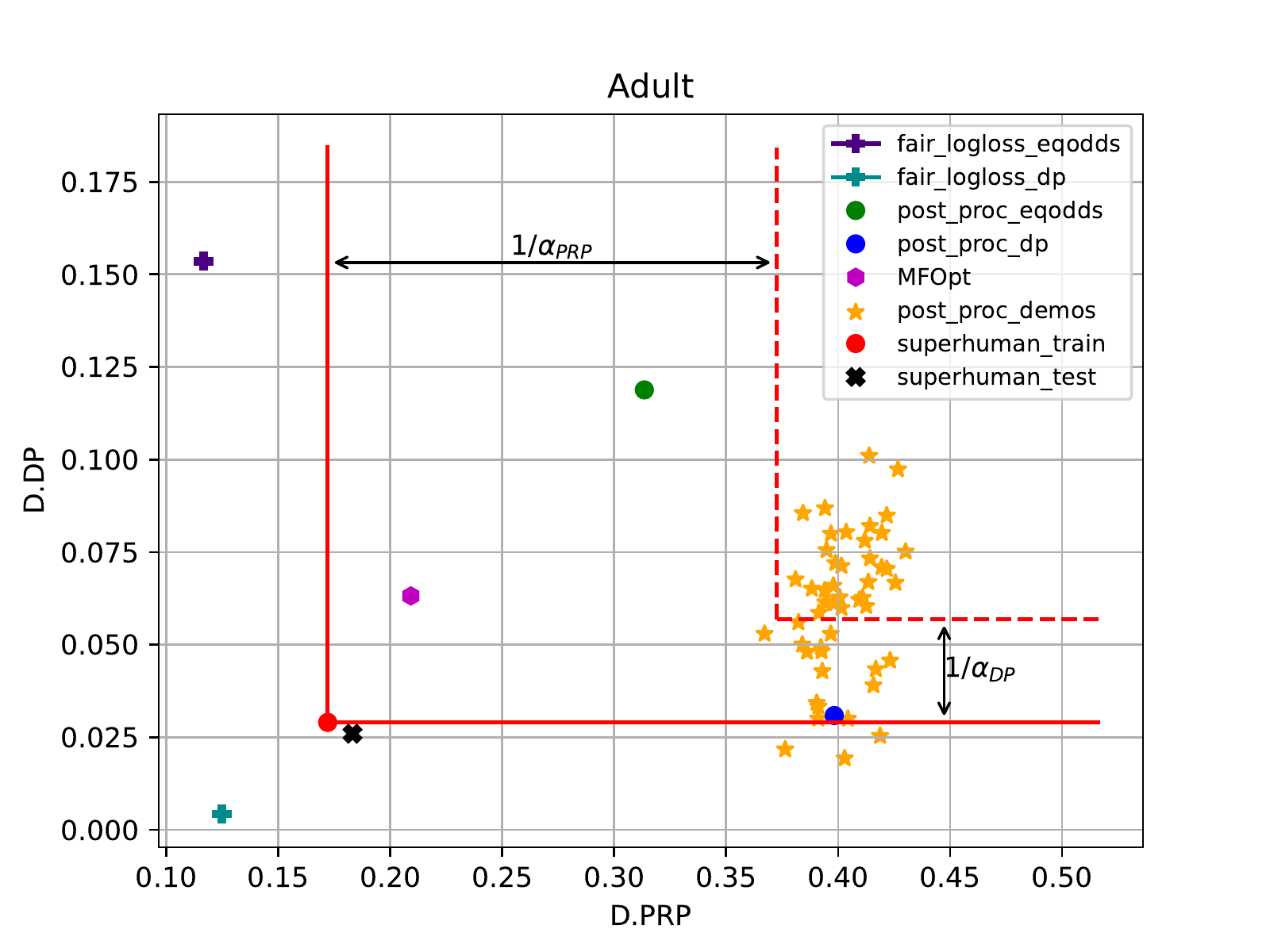}
    \includegraphics[width=.34\textwidth]{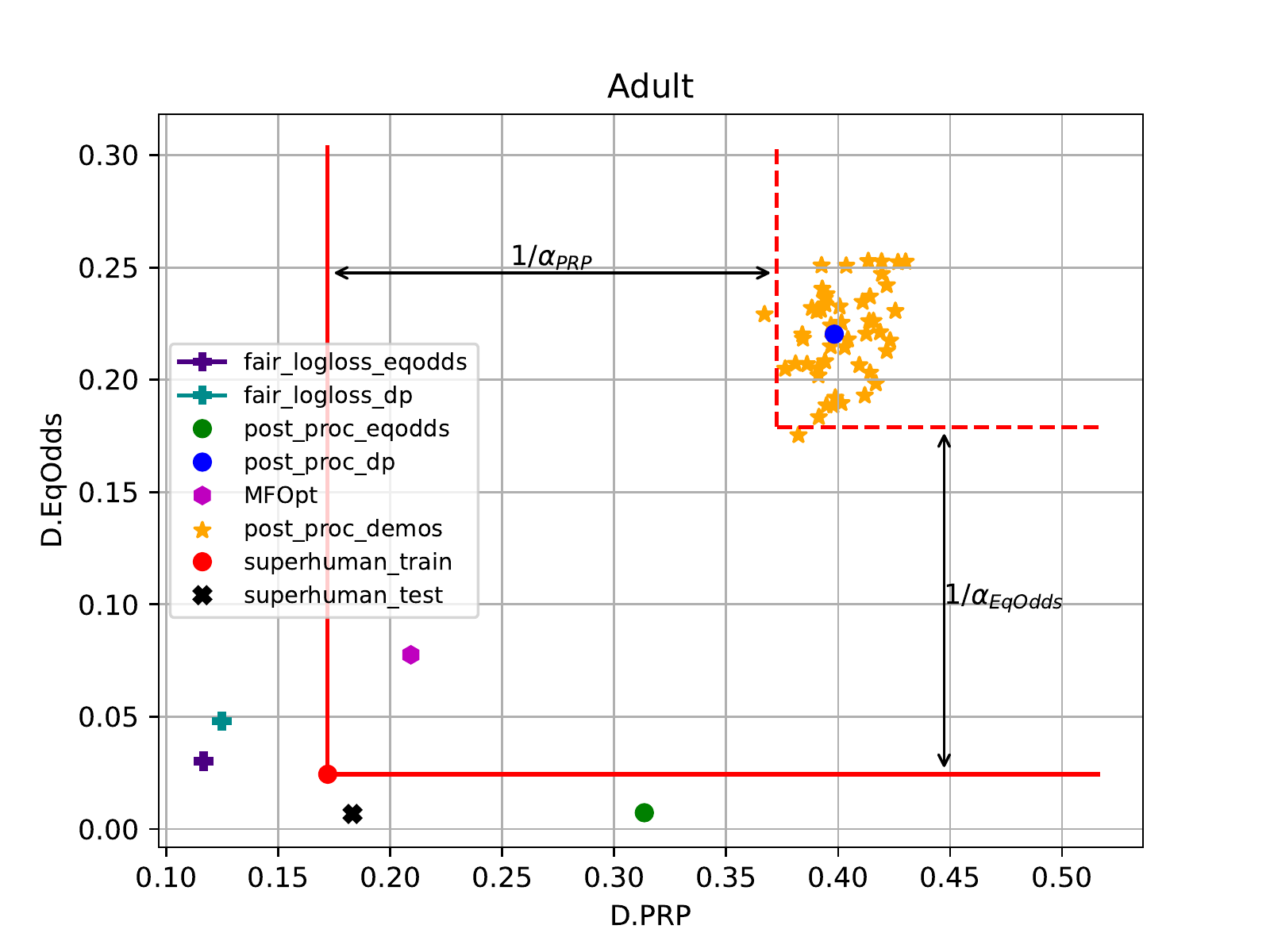}\\
    \includegraphics[width=.34\textwidth]{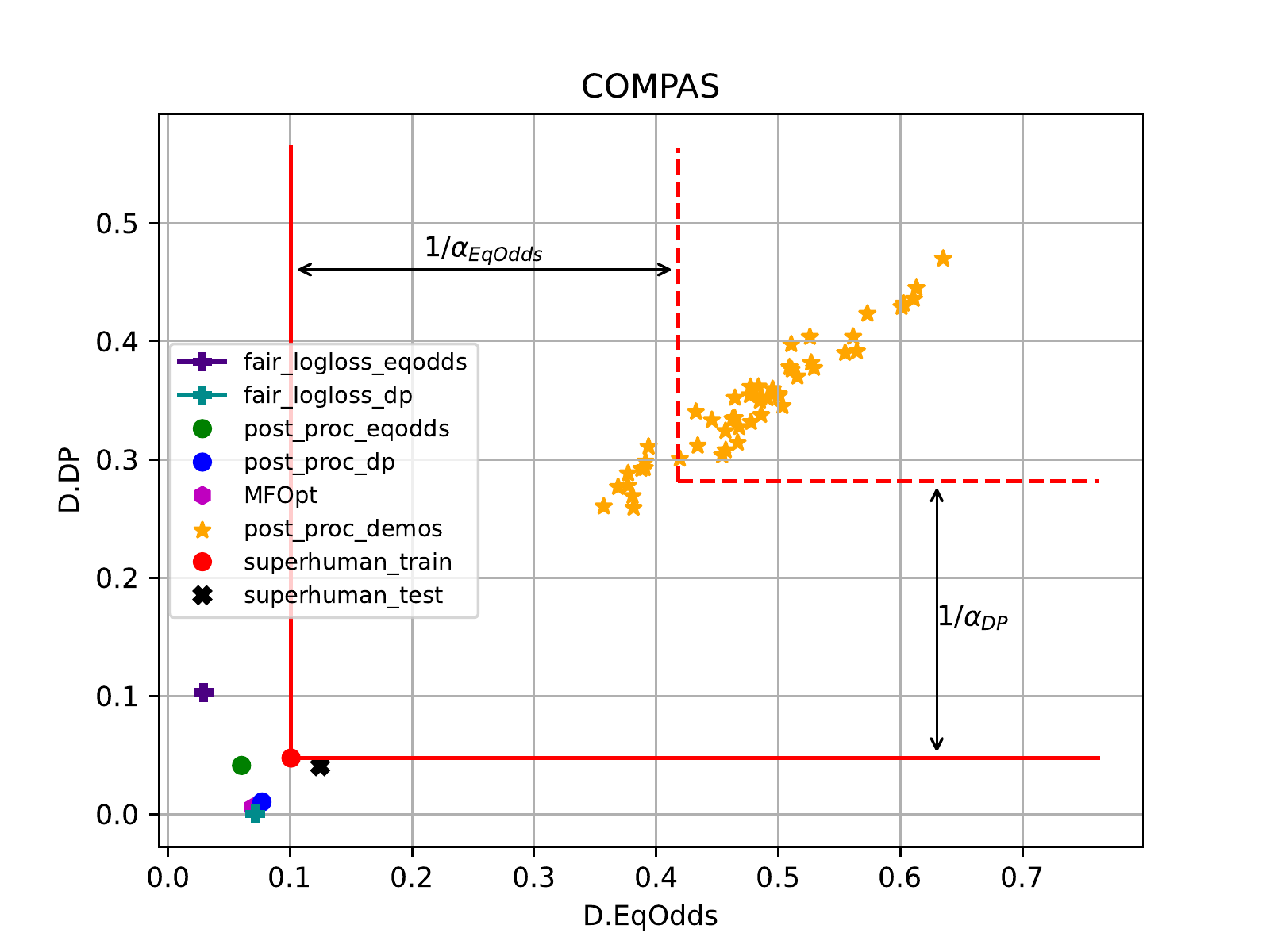} 
    \includegraphics[width=.34\textwidth]{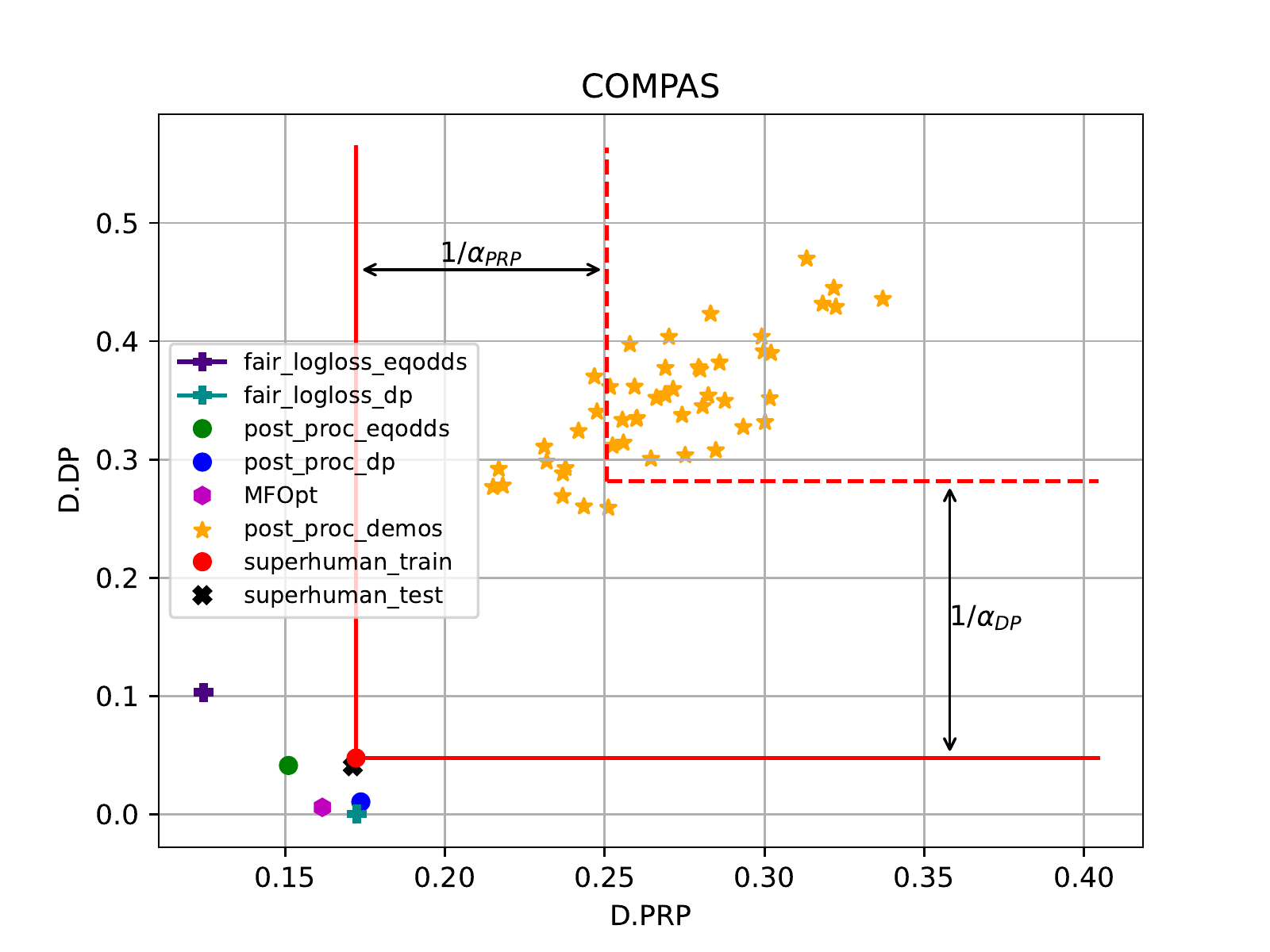}
    \includegraphics[width=.34\textwidth]{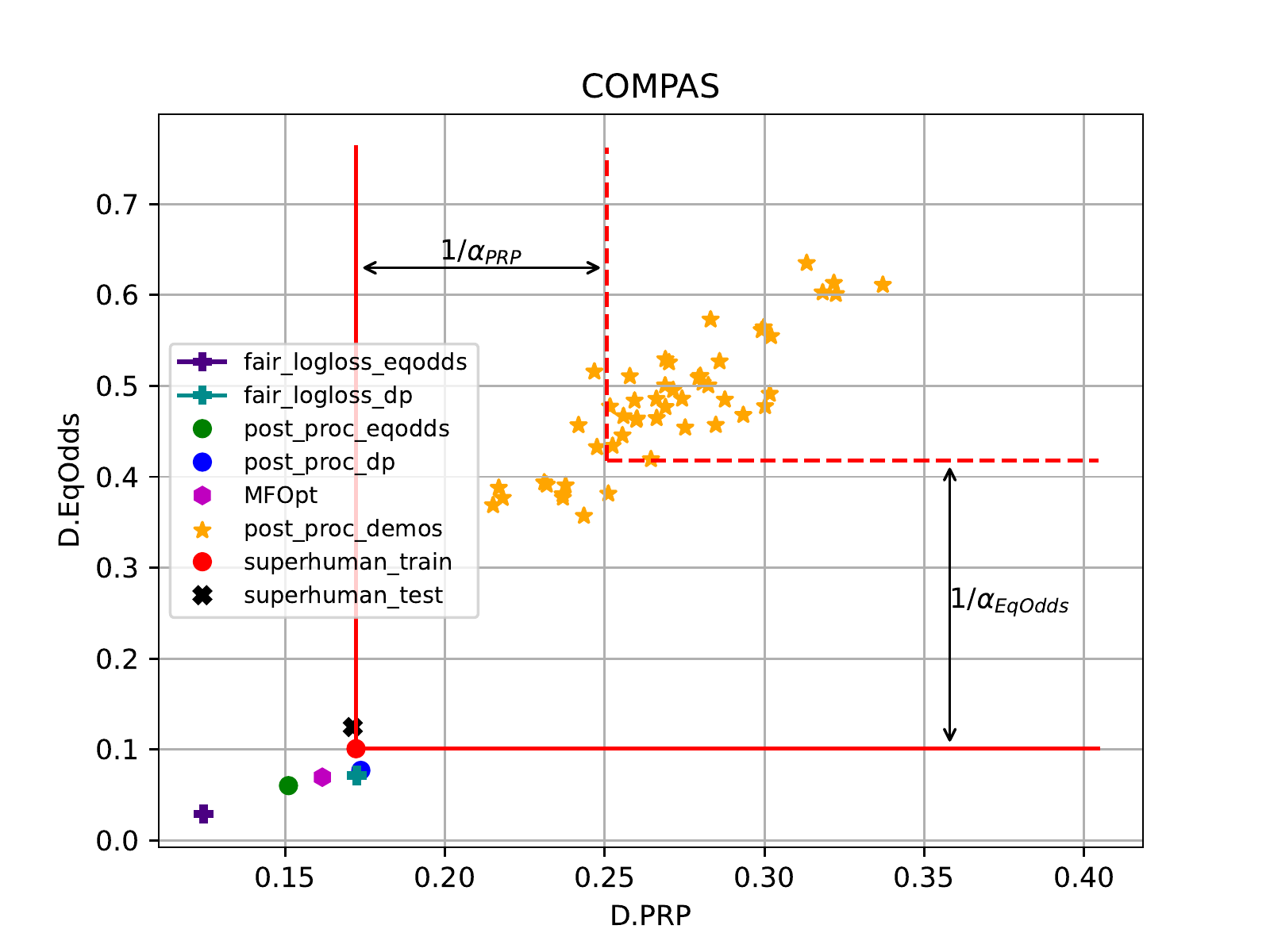}

    \caption{\small
    The trade-off between each pair of: {\it difference of} \emph{Demographic Parity} ({\texttt{D.DP}}), \emph{Equalized Odds} ({\texttt{D.EqOdds}}) and \emph{Predictive Rate Parity} ({\texttt{D.PR}}) on test data using noiseless training data ($\epsilon=0$) for \texttt{Adult} (top row) and \texttt{COMPAS} (bottom row) datasets.}
    \label{fig:result3}
\end{figure*}

\begin{figure*}[t]
    \includegraphics[width=.33\textwidth]{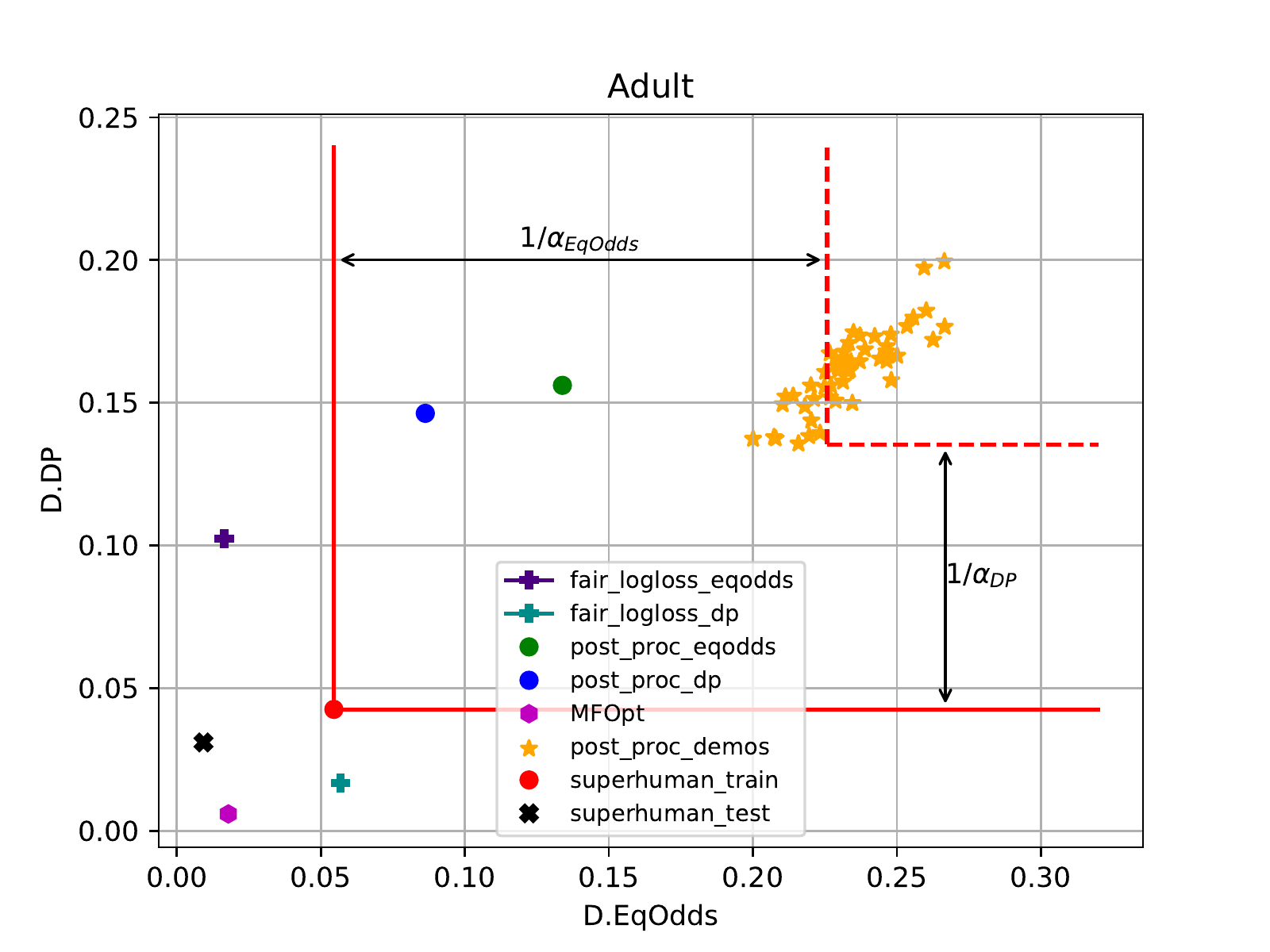} 
    \includegraphics[width=.33\textwidth]{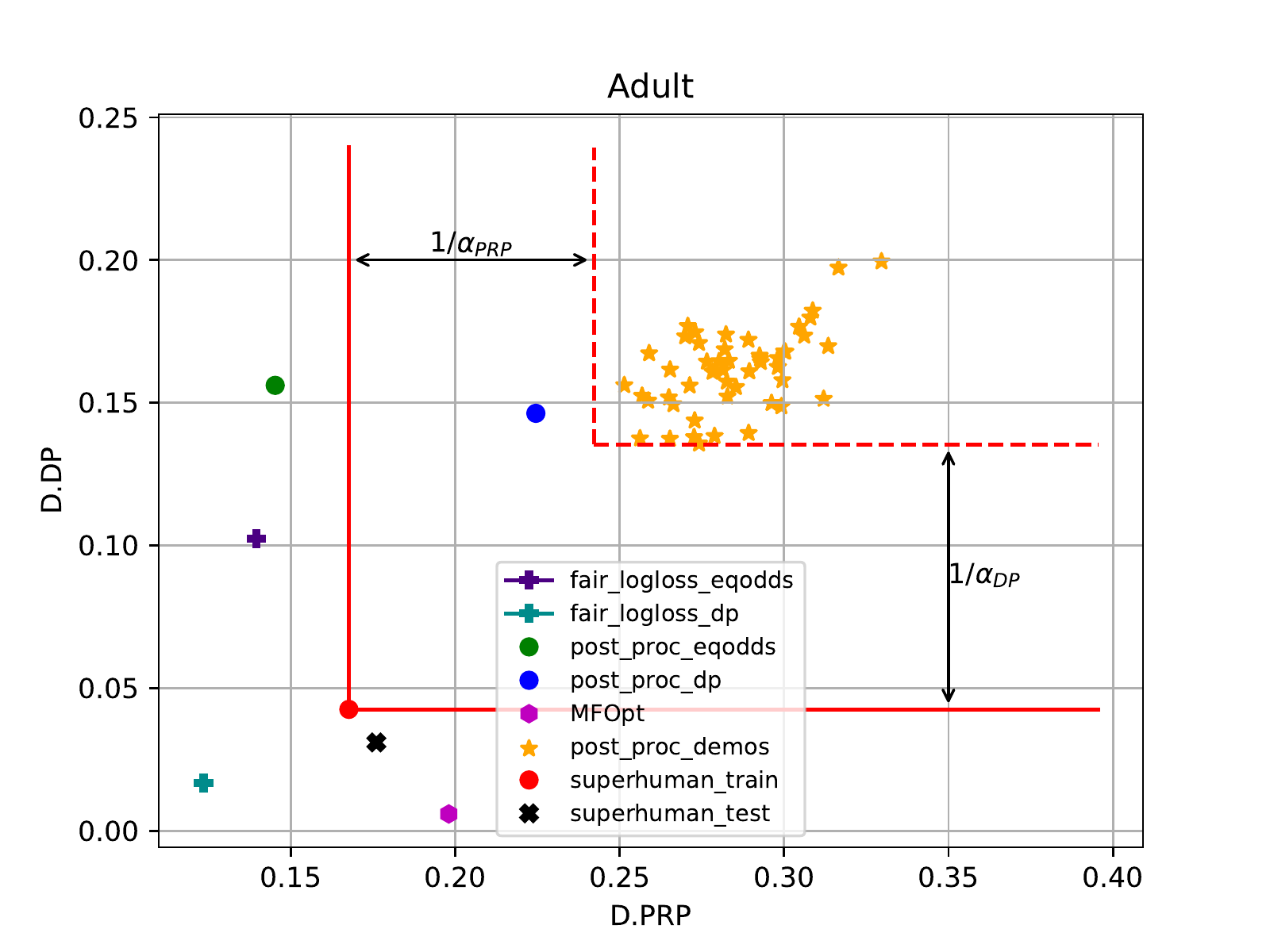}
    \includegraphics[width=.33\textwidth]{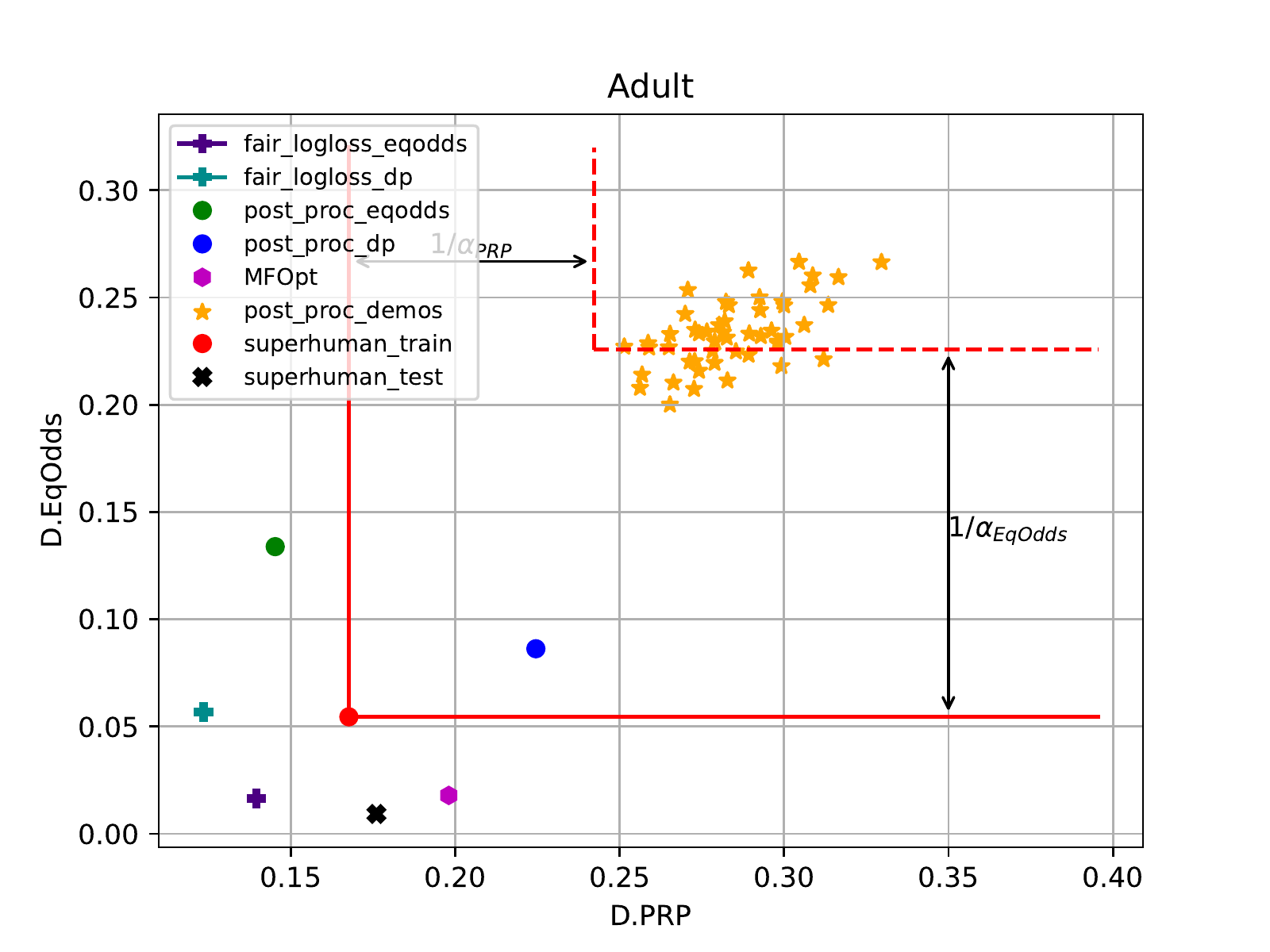}\\
    \includegraphics[width=.33\textwidth]{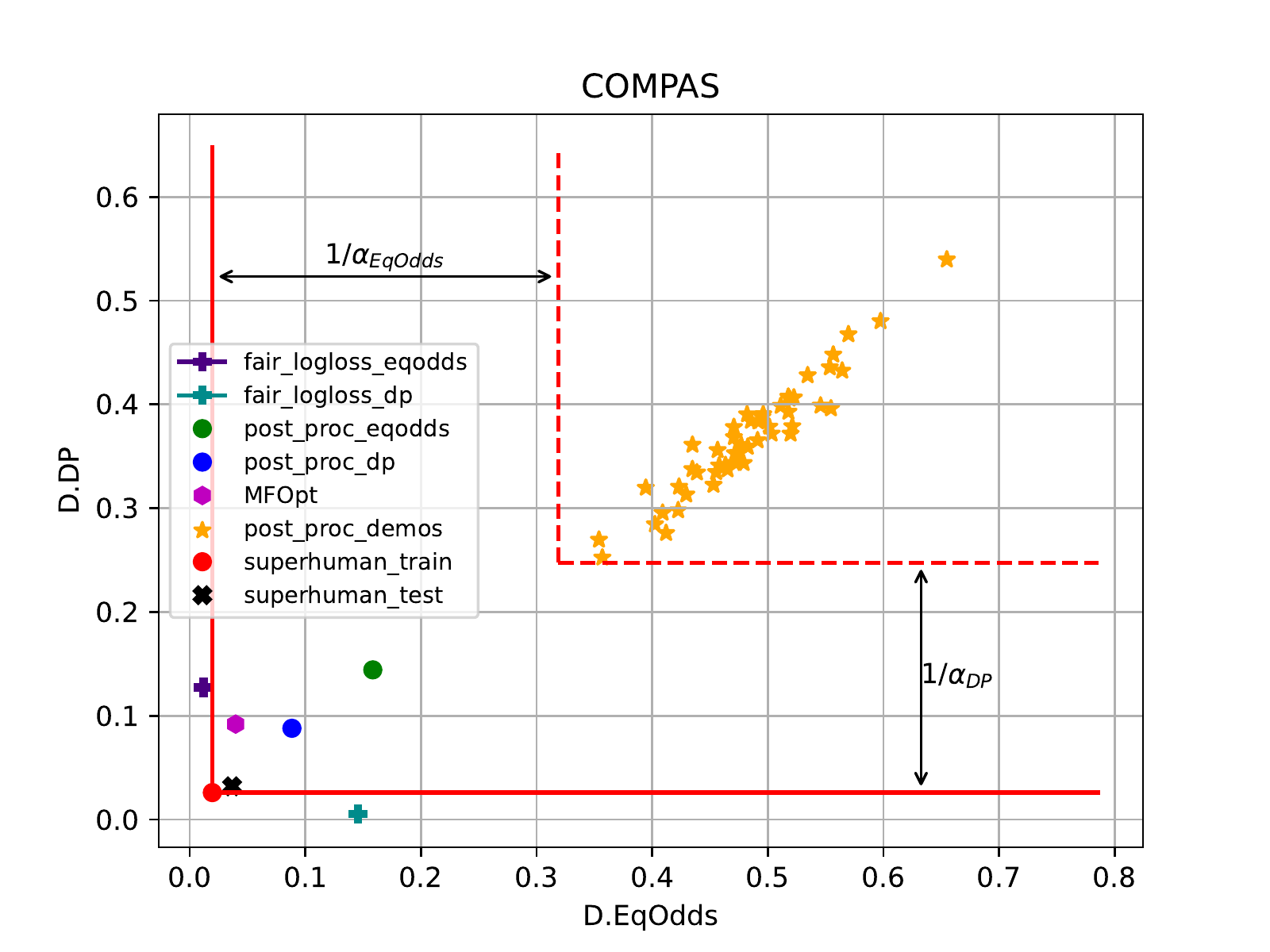} 
    \includegraphics[width=.33\textwidth]{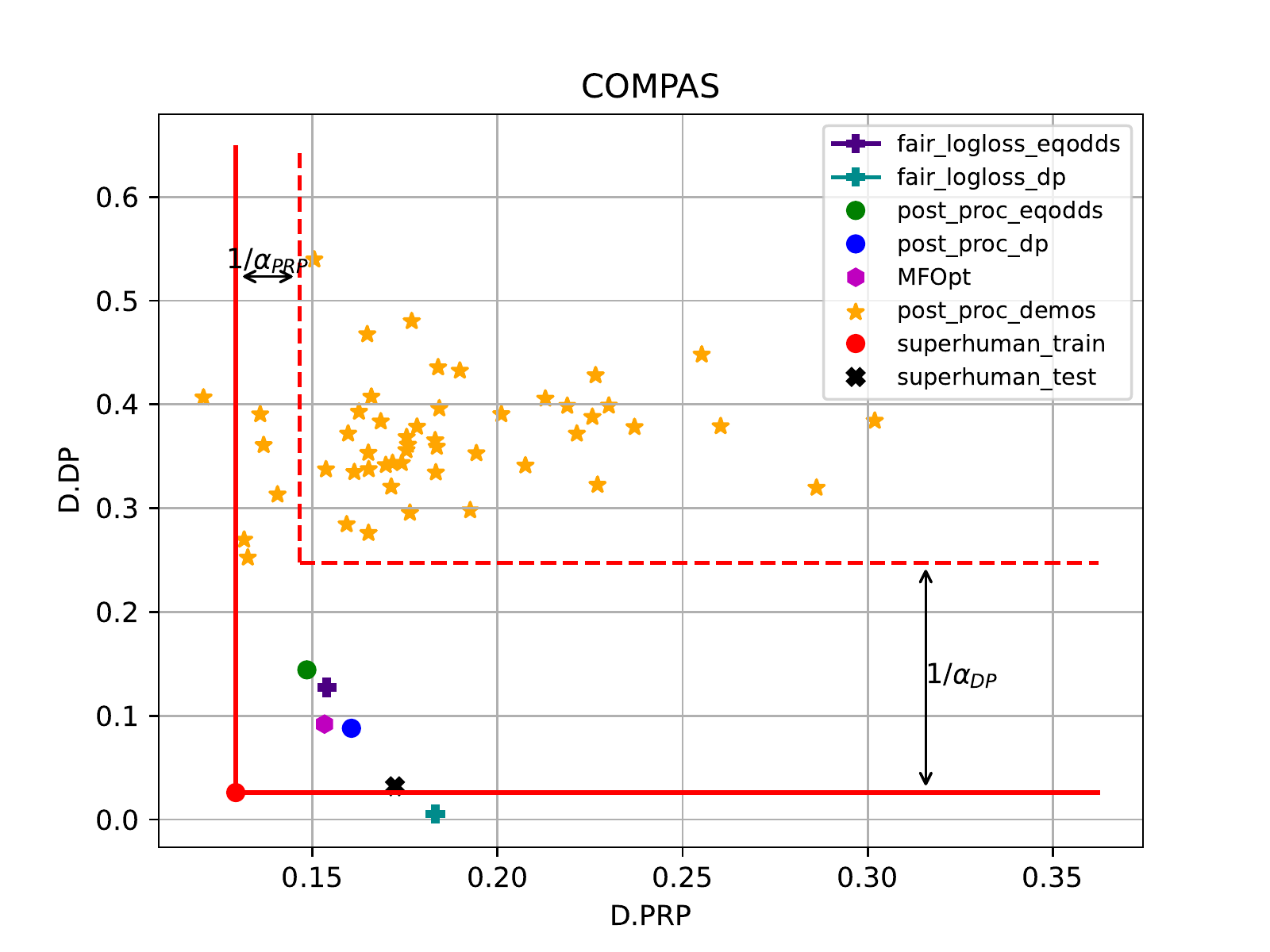}
    \includegraphics[width=.33\textwidth]{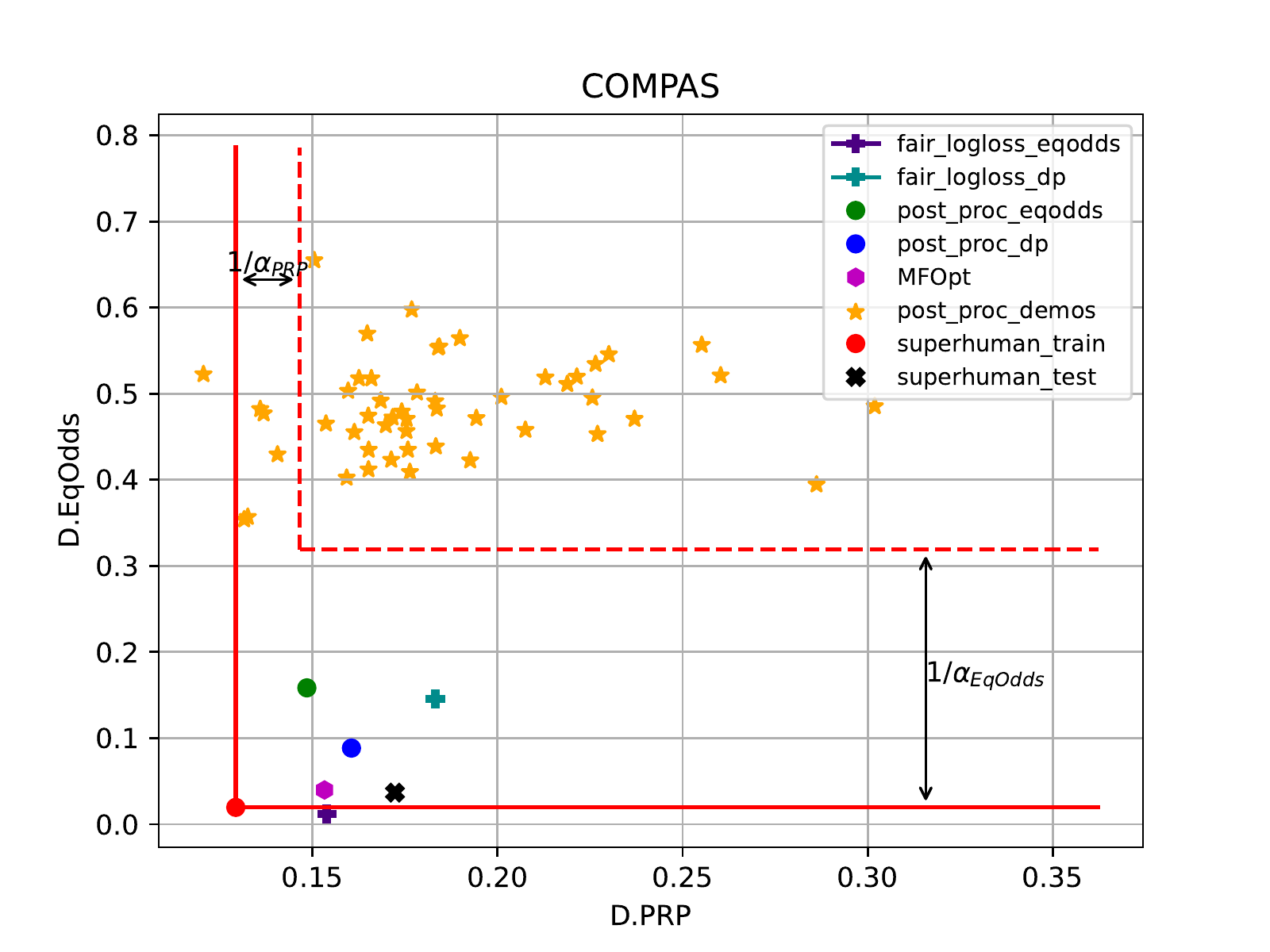}
    \caption{\small
    The trade-off between each pair of: {\it difference of} \emph{Demographic Parity} ({\texttt{D.DP}}), \emph{Equalized Odds} ({\texttt{D.EqOdds}}) and \emph{Predictive Rate Parity} ({\texttt{D.PR}}) on test data using noiseless training data ($\epsilon=0.2$) for \texttt{Adult} (top row) and \texttt{COMPAS} (bottom row) datasets.}
    \label{fig:result4}
\end{figure*}

\begin{figure*}[t]
    \includegraphics[width=.34\textwidth]{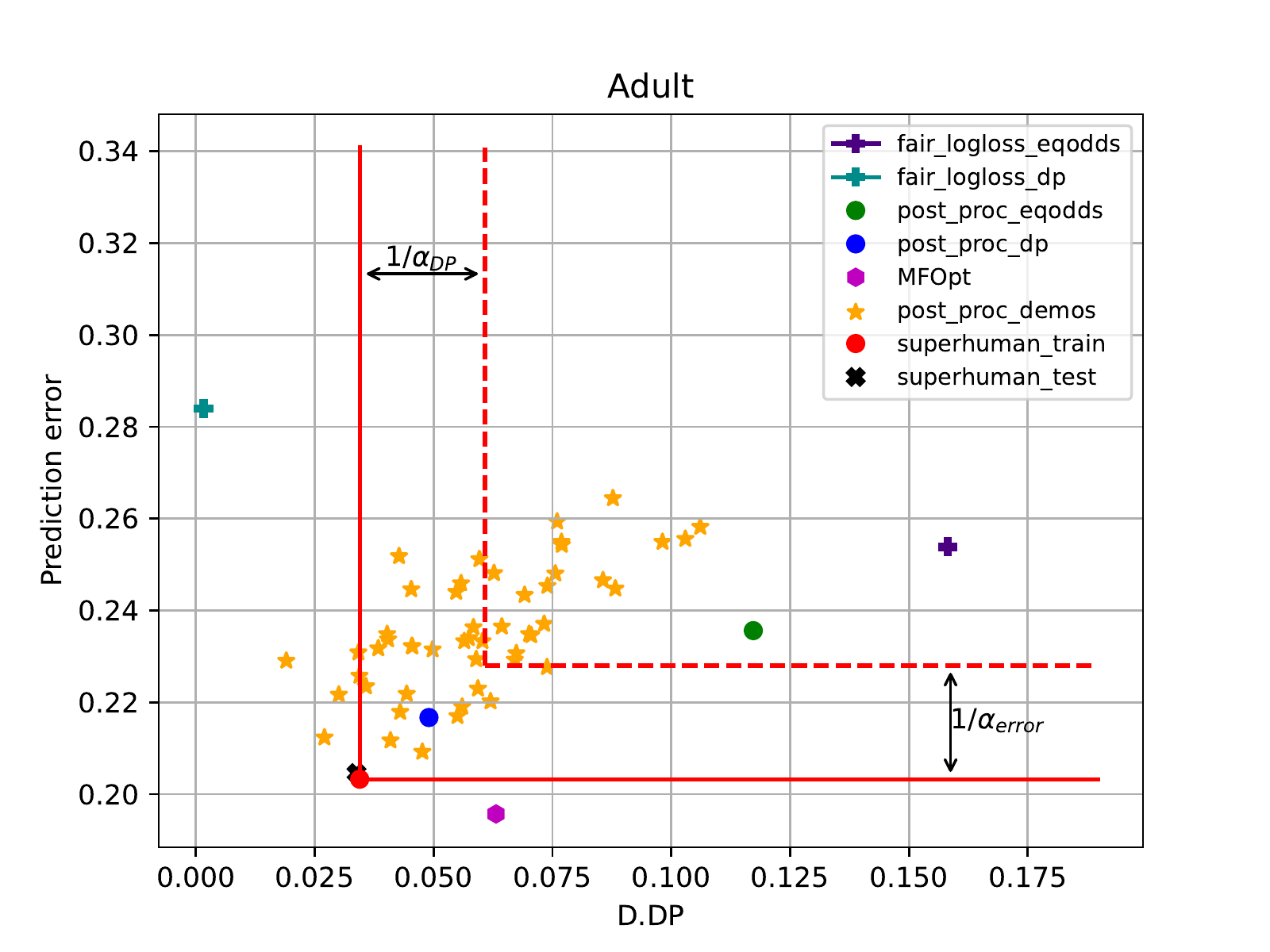} 
    \includegraphics[width=.34\textwidth]{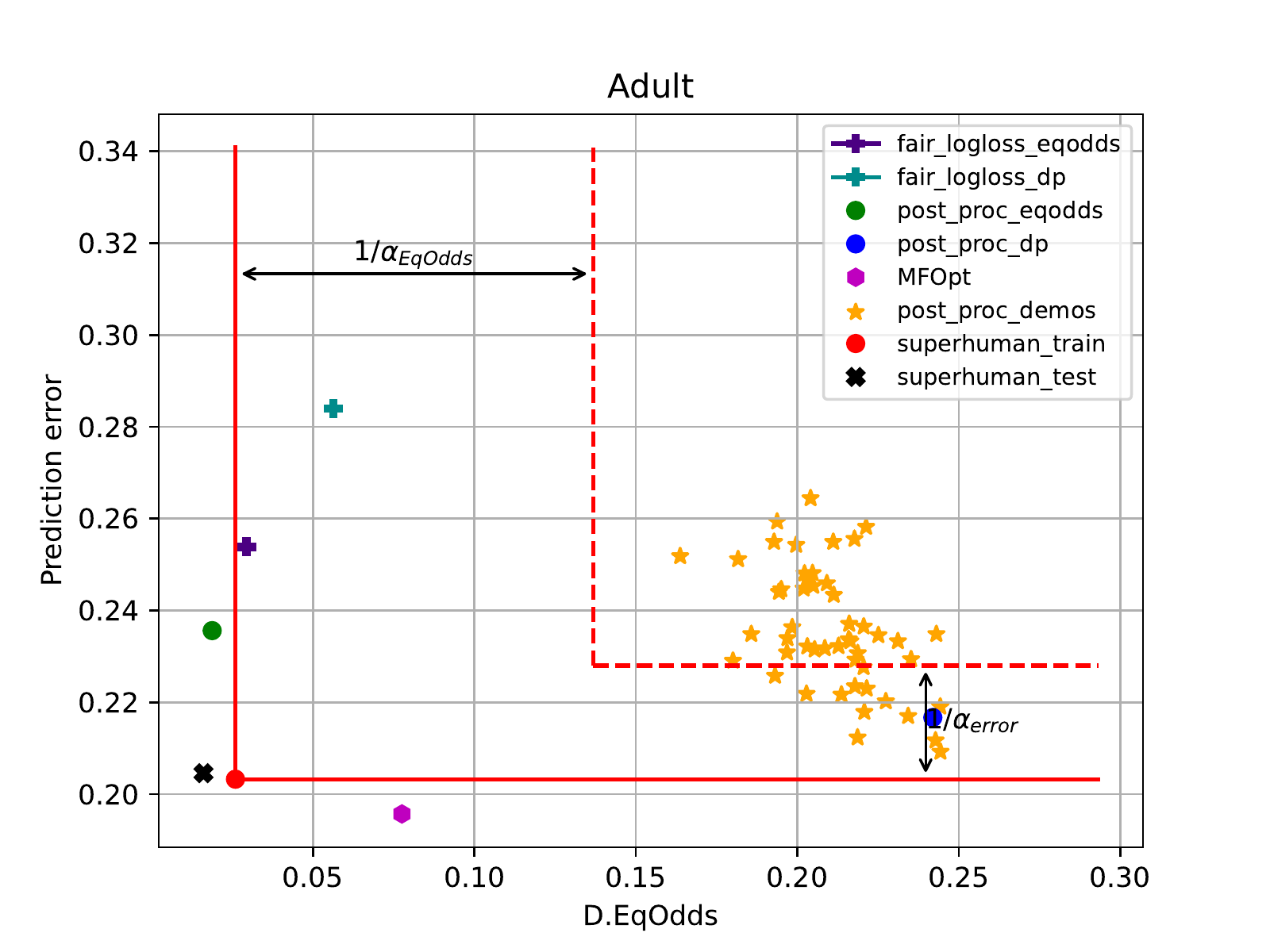}
    \includegraphics[width=.34\textwidth]{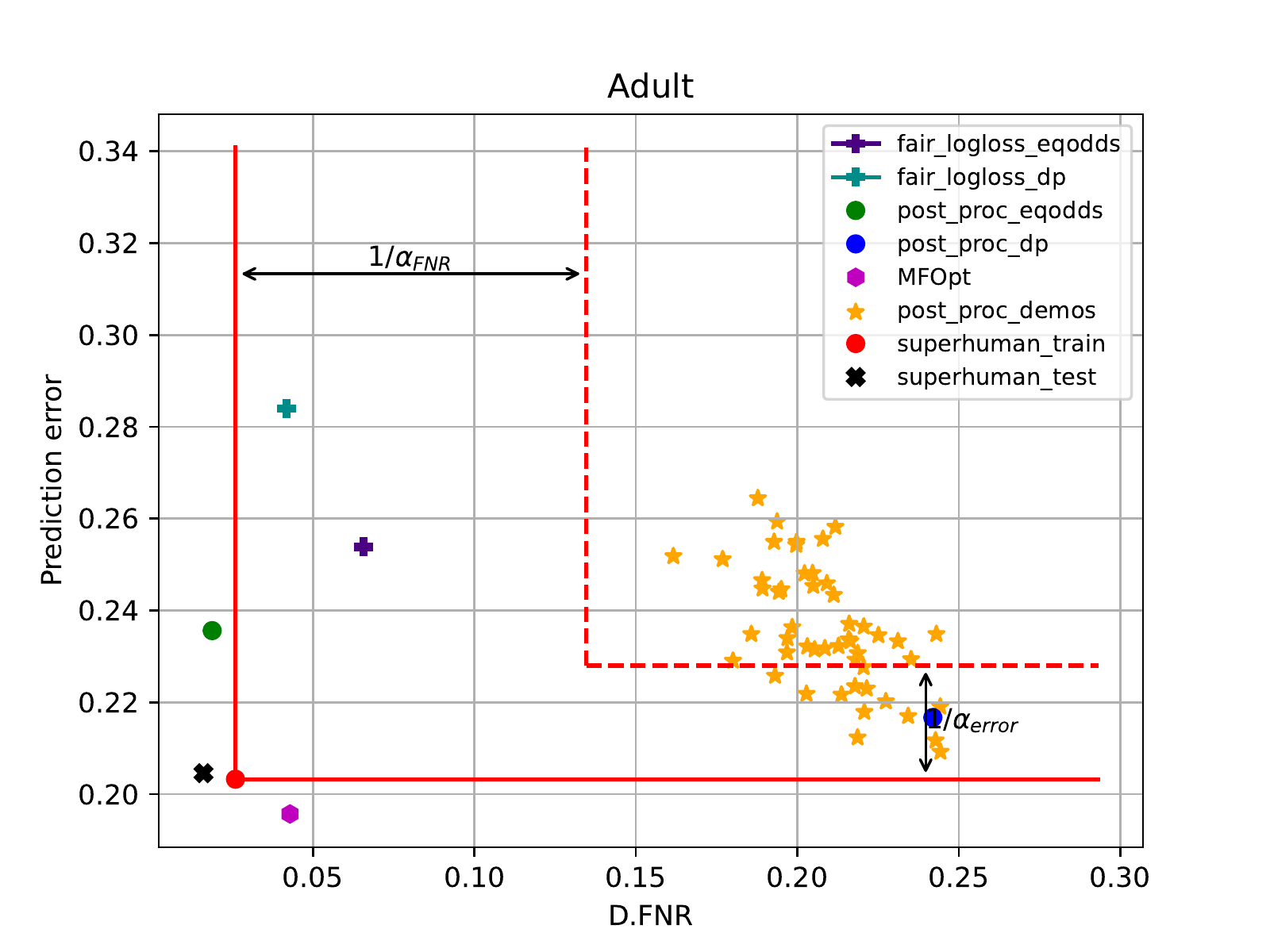}\\
    \includegraphics[width=.34\textwidth]{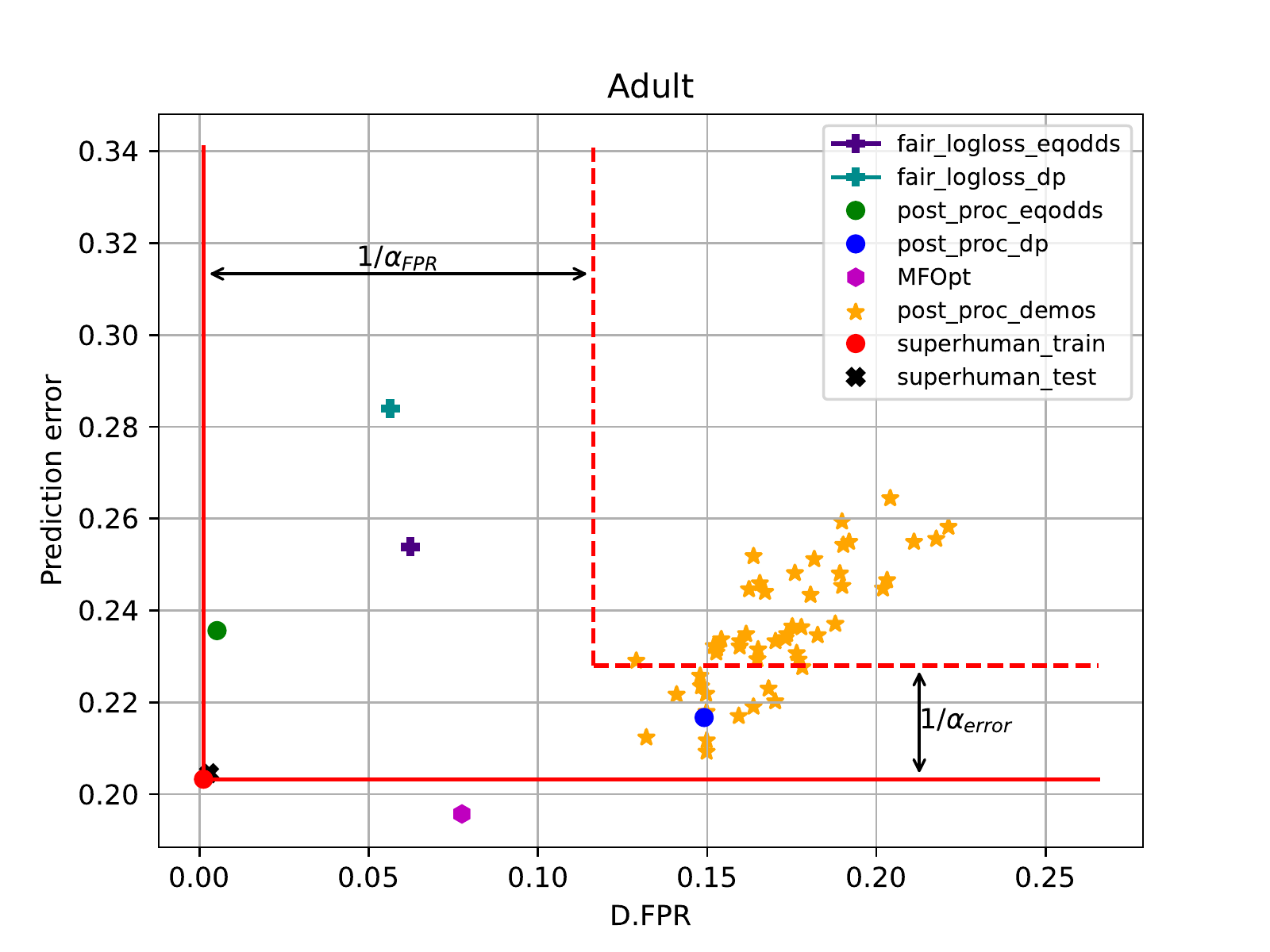} 
    \includegraphics[width=.34\textwidth]{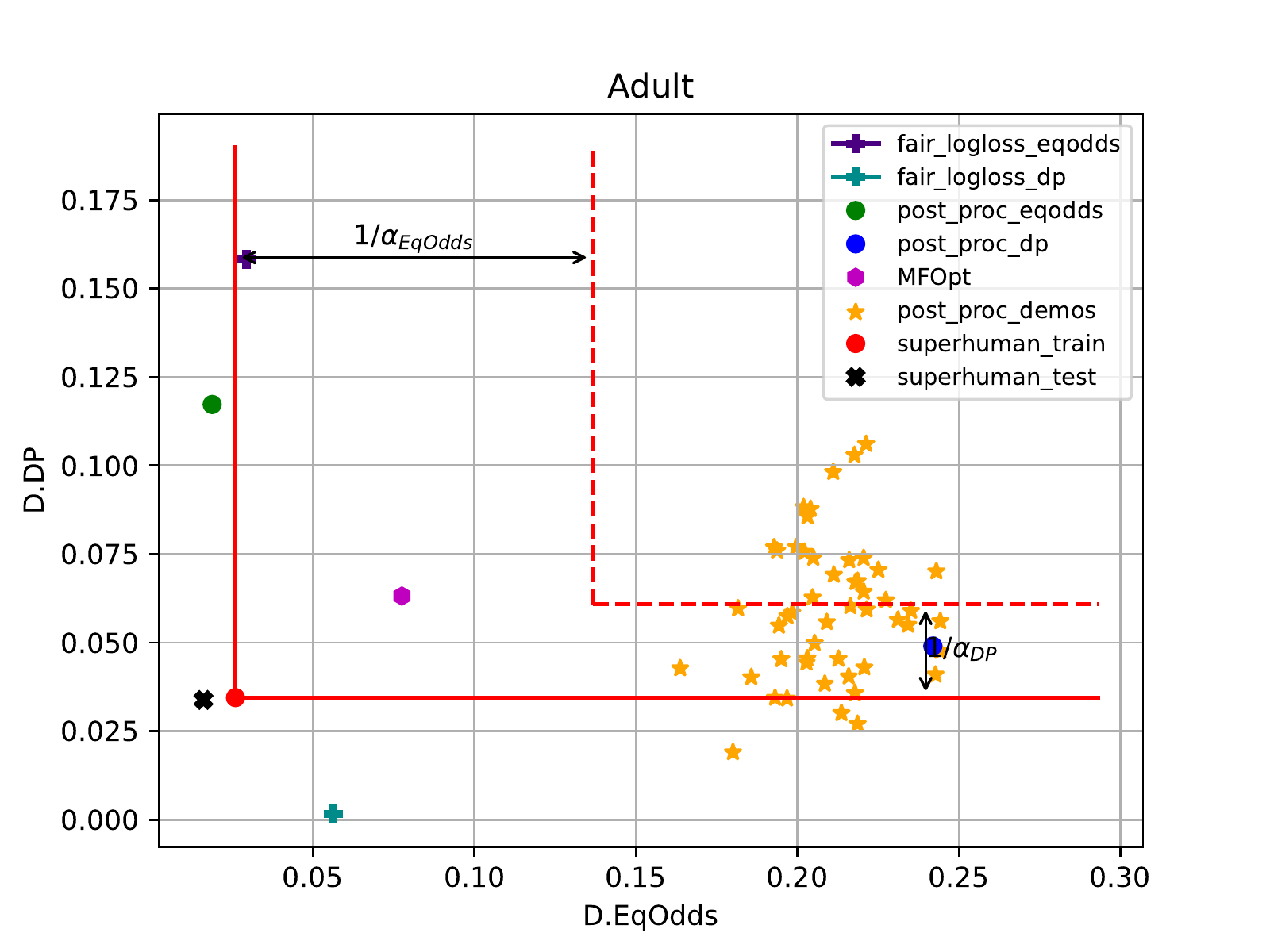}
    \includegraphics[width=.34\textwidth]{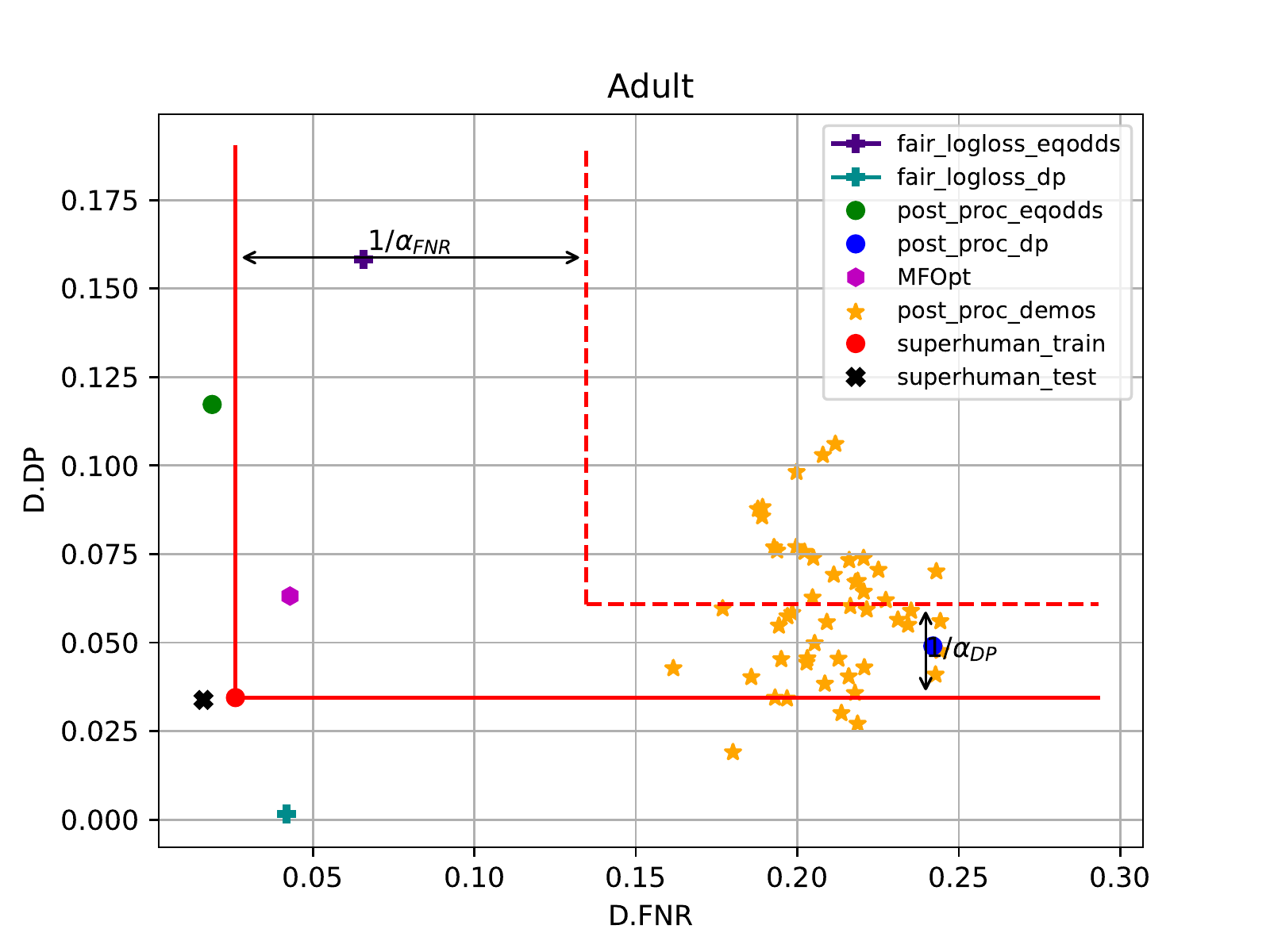}\\
    \includegraphics[width=.34\textwidth]{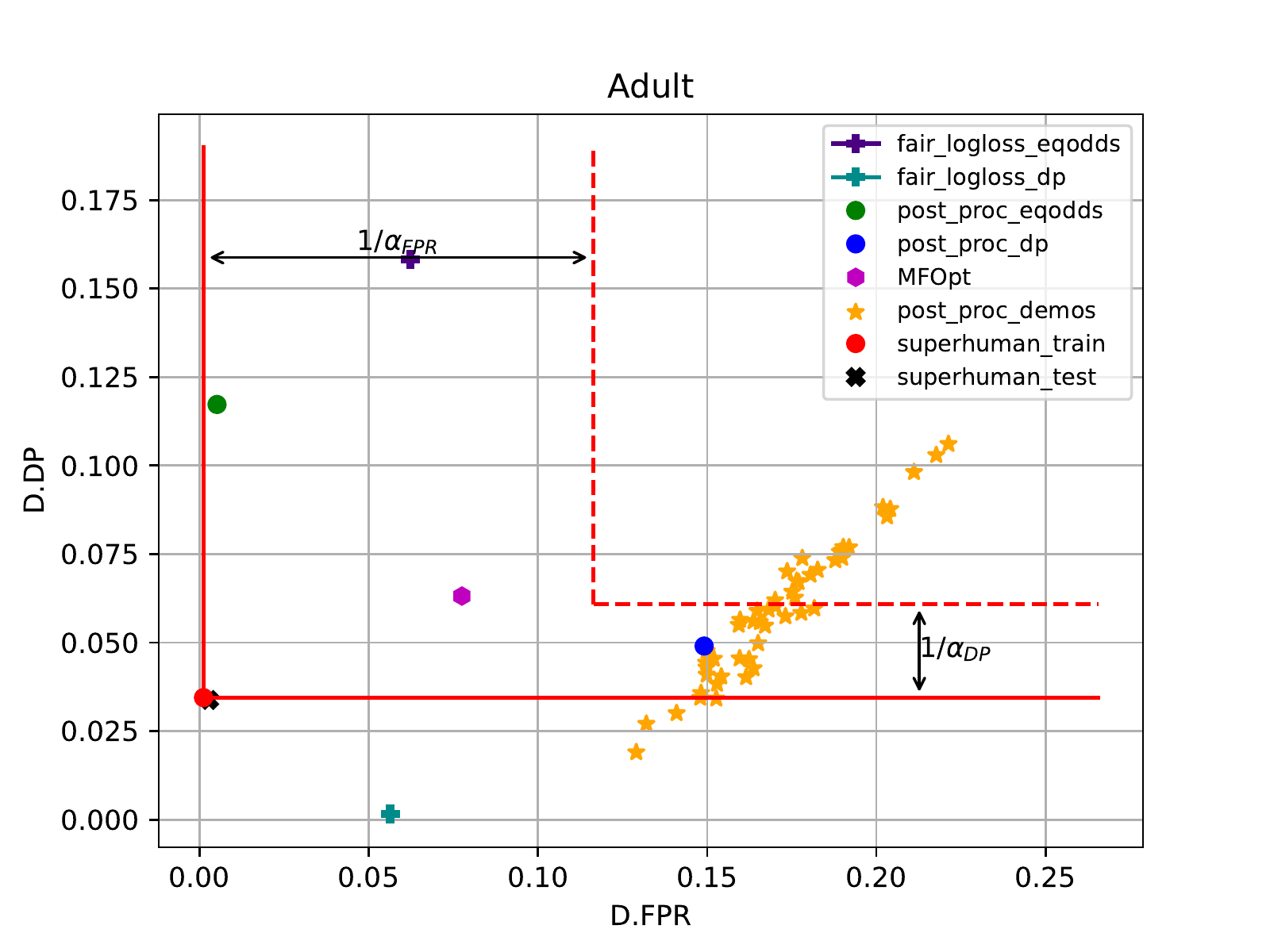} 
    \includegraphics[width=.34\textwidth]{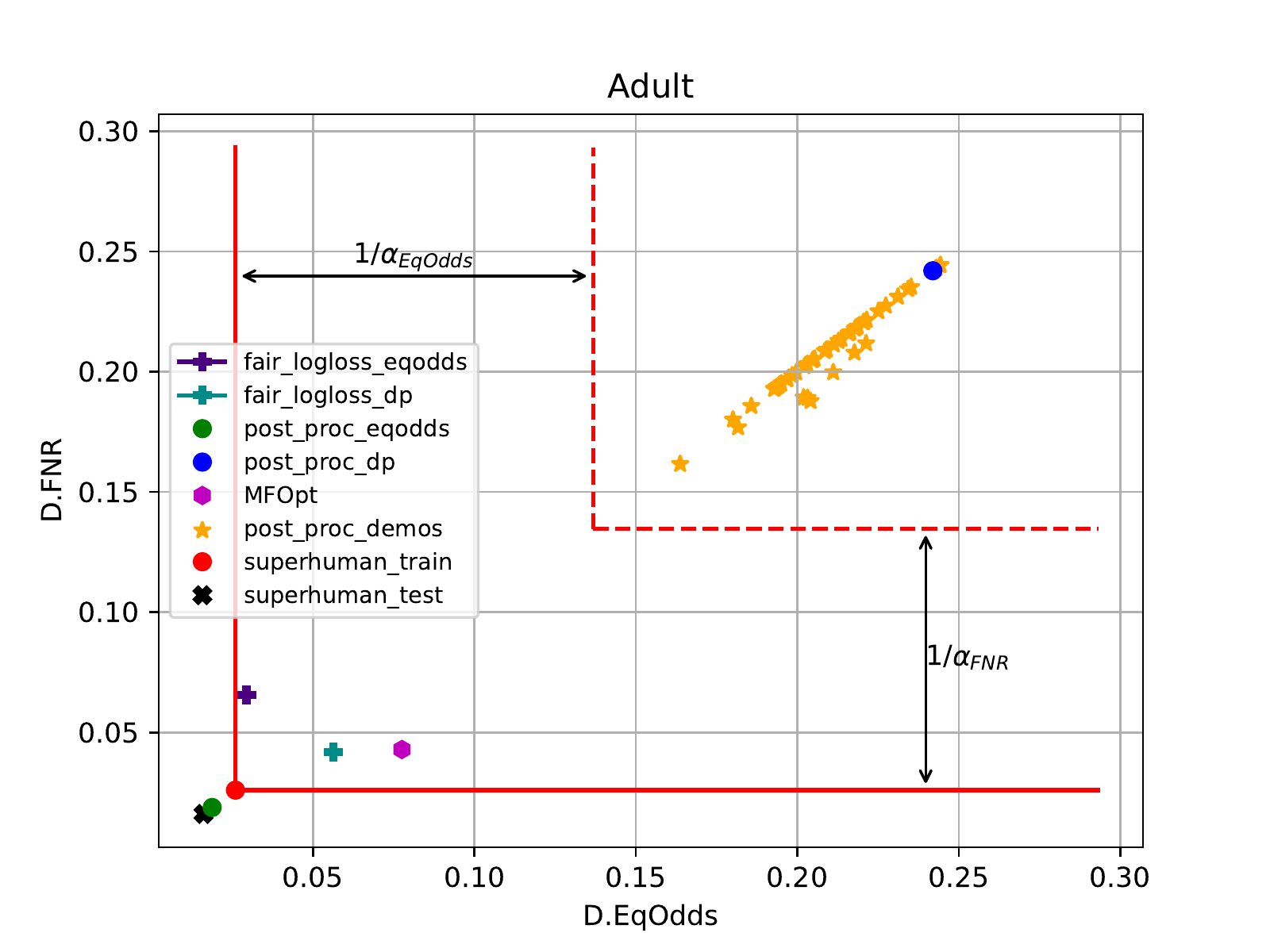}
    \includegraphics[width=.34\textwidth]{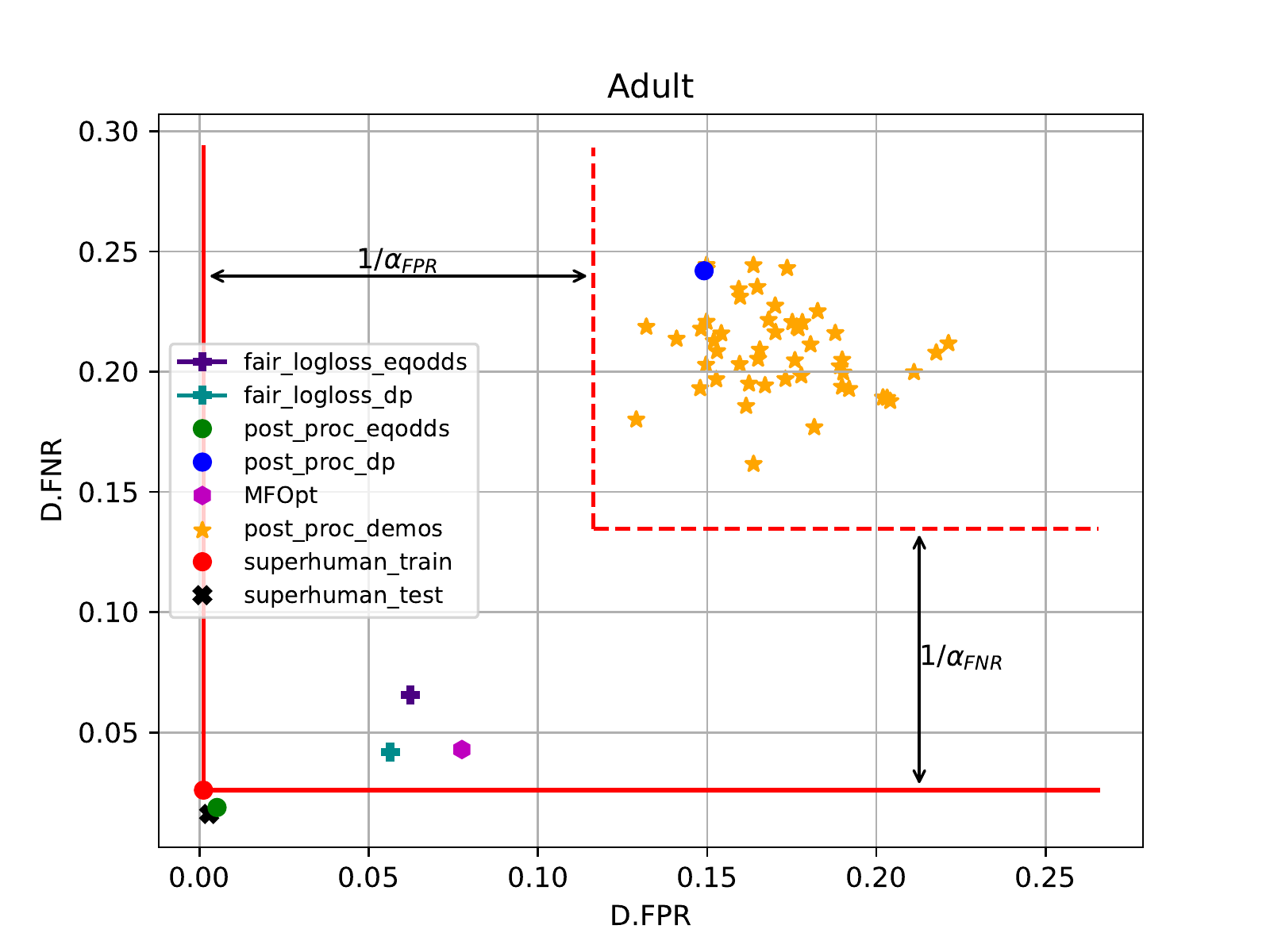}
    \\
    \vspace{-6mm}
    \begin{center}
    \quad
    \includegraphics[width=.34\textwidth]{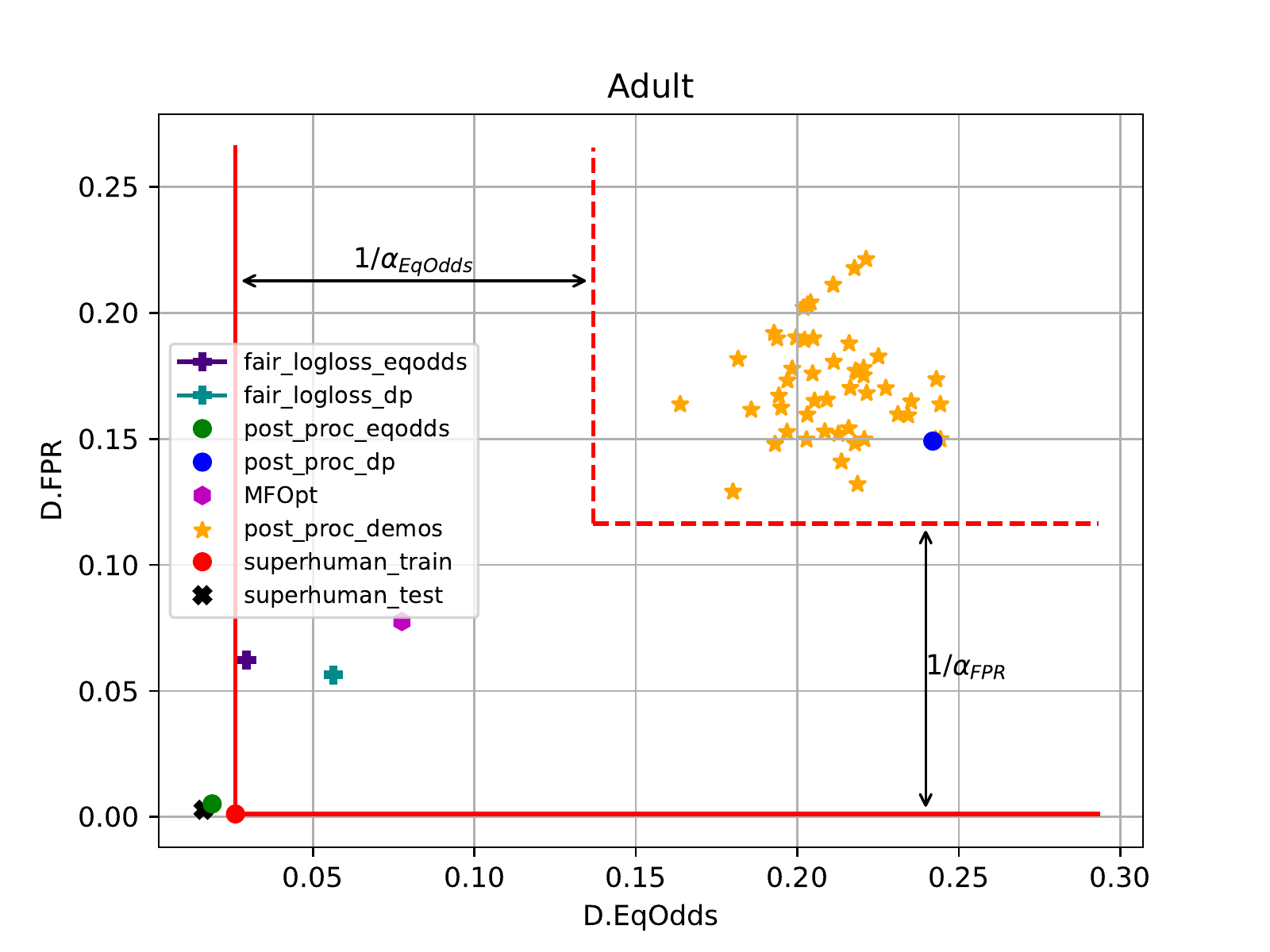}
    \end{center}

    \caption{\small
    The trade-off between each pair of: {\it difference of} \emph{Demographic Parity} ({\texttt{D.DP}}), \emph{Equalized Odds} ({\texttt{D.EqOdds}}), \emph{False Negative Rate} ({\texttt{D.FNR}}), \emph{False Positive Rate} ({\texttt{D.FPR}}) and \emph{Prediction Error} on test data using noiseless training data ($\epsilon=0$) for \texttt{Adult} dataset.}
    \label{fig:result5}
\end{figure*}

\subsection{Experiment with more measures}
Since our approach is flexible enough to accept wide range of fairness/performance measures, we extend the experiment on \texttt{Adult} to $K=5$ features. In this experiment we use \emph{Demographic Parity} ({\texttt{D.DP}}), \emph{Equalized Odds} ({\texttt{D.EqOdds}}), \emph{False Negative Rate} ({\texttt{D.FNR}}), \emph{False Positive Rate} ({\texttt{D.FPR}}) and \emph{Prediction Error} as the features to outperform reference decisions on. The results are shown in Figure \ref{fig:result5}.

\end{document}